\documentclass{article}

\newcommand{\additionalpackages}{\setlength{\oddsidemargin}{0.25 in}
\setlength{\evensidemargin}{-0.25 in}
\setlength{\topmargin}{-0.6 in}
\setlength{\textwidth}{6 in}
\setlength{\textheight}{8.5 in}
\setlength{\headsep}{0.75 in}
\setlength{\parindent}{0 in}
\setlength{\parskip}{0.1 in}}
\newcommand{\beforeappendix}{\newpage \section*{Acknowledgements}
IB, OS and YM are supported by the European Research Council (ERC) under the European Union’s Horizon 2020 research and innovation program (grant agreement No. 882396), by the Israel Science Foundation and the Yandex Initiative for Machine Learning at Tel Aviv University and by a grant from the Tel Aviv University Center for AI and Data Science (TAD).
OS is also supported by the TAD Excellence Program for Doctoral Students in Artificial Intelligence and Data Science from the Tel Aviv University Center for AI and Data Science (TAD) and from the Israeli Council for Higher Education (CHE) Fellowship for Outstanding PhD Students in Data Science.
}
\newcommand{\afterappendix}{}

\usepackage[numbers]{natbib}
\additionalpackages

\usepackage[utf8]{inputenc} % allow utf-8 input
\usepackage[T1]{fontenc}    % use 8-bit T1 fonts
\usepackage{hyperref}       % hyperlinks, MUST be before cleveref
\usepackage{url}            % simple URL typesetting
\usepackage{booktabs}       % professional-quality tables
\usepackage{amsfonts}       % blackboard math symbols
\usepackage{nicefrac}       % compact symbols for 1/2, etc.Key
\usepackage{microtype}      % microtypography

\usepackage{graphicx} % Required for inserting images
\usepackage{amsmath}
\usepackage{amssymb}
\usepackage{mathtools}
\usepackage{amsthm}
\usepackage{algorithm}
\usepackage{algpseudocode}
\usepackage[dvipsnames]{xcolor}
\usepackage{tikz}
\usepackage{hyperref} % 
\usepackage{cleveref}
\usepackage{dsfont}
\usepackage{enumitem}

\newcommand{\calA}{\mathcal{A}}
\newcommand{\A}{\calA}

\newcommand{\regret}{\operatorname{Regret}}
\newcommand{\calH}{\mathcal{H}}

\newcommand{\C}{\mathcal{C}}
\renewcommand{\citet}[1]{\citeauthor{#1} (\citeyear{#1})}

\newcommand{\curly}[1]{ {\left\{ #1 \right\}}}
\newcommand{\roundy}[1]{ {\left( #1 \right)}}
\newcommand{\squary}[1]{ {\left[ #1 \right]}}
\newcommand{\abs}[1]{ {\left | #1 \right |}}

\newcommand{\E}{\mathbb{E}}
\newcommand{\rr}{\mathbb{R}}
\newcommand{\Expect}[1]{\E\left[#1\right]}

\newcommand{\AlgA}{\texttt{A}}
\newcommand{\AlgB}{\texttt{B}}
\newcommand{\AlgAFull}[3]{\AlgA^{#2}(#1,#3)}
\newcommand{\AlgBFull}[2]{\AlgB(#1,#2)}
\newcommand{\algnameShort}{\texttt{CAOS}}
\newenvironment{proofsketch}{{\bf Proof sketch:}}{\hfill\rule{2mm}{2mm}}

\theoremstyle{plain}
\newtheorem{theorem}{Theorem}[section]

\newtheorem{lemma}[theorem]{Lemma}

\theoremstyle{definition}
\newtheorem{definition}[theorem]{Definition}

\newtheorem{property}[theorem]{Property}
\theoremstyle{remark}
\newtheorem{remark}[theorem]{Remark}

\makeatletter \newcommand{\ForceCrefTypeInEnv}[2]{%
\AddToHook{env/#1/begin}{% 
\let\Cref@oldlabel\label \def\label##1{\Cref@oldlabel[#2]{##1}}% 
}% 
\AddToHook{env/#1/end}{\let\label\Cref@oldlabel}} 
\makeatother
\ForceCrefTypeInEnv{lemma}{lemma} 
\ForceCrefTypeInEnv{definition}{definition} 
\ForceCrefTypeInEnv{proposition}{proposition} 
\ForceCrefTypeInEnv{claim}{claim} 
\ForceCrefTypeInEnv{corollary}{corollary} 
\ForceCrefTypeInEnv{example}{example}
\ForceCrefTypeInEnv{remark}{remark}
\ForceCrefTypeInEnv{property}{property}
\Crefname{lemma}{Lemma}{Lemmas}
\Crefname{definition}{Definition}{Definitions} 
\Crefname{proposition}{Proposition}{Propositions} 
\Crefname{claim}{Claim}{Claims}
\Crefname{corollary}{Corollary}{Corollaries} 
\Crefname{example}{Example}{Examples}
\Crefname{remark}{Remark}{Remarks}
\Crefname{property}{Property}{Properties}

\newcommand{\mail}[1]{\href{mailto:#1}{\color{black} \texttt{#1}}}

\newcommand\asteriskfootnote[1]{%
  \begin{NoHyper}
  \renewcommand\thefootnote{* }\footnotetext{#1}%
  \end{NoHyper}
}

\begin{document}

\title{Collaborating in Multi-Armed Bandits with Strategic Agents}

\author{Idan Barnea$^*$ \\ Tel Aviv University\\ \mail{idan1id@gmail.com}
\and
Ofir Schlisselberg$^*$ \\ Tel Aviv University \\ \mail{ofirs4@mail.tau.ac.il}
\and
Yishay Mansour \\ Tel Aviv University and Google Research \\ \mail{mansour.yishay@gmail.com}}
\date{}

% \author{Idan Barnea$^*$\blackfootnote{Tel Aviv University; \mail{idan1id@gmail.com}}
% \and
% Ofir Schlisselberg$^*$\blackfootnote{Tel Aviv University; \mail{ofirs4@mail.tau.ac.il}} 
% \and
% Yishay Mansour$^\dagger$\blackfootnote{Tel Aviv University and Google Research; \mail{mansour.yishay@gmail.com}}}
\maketitle

\asteriskfootnote{Equal contribution, alphabetical order.}

\begin{abstract}
We study collaborative learning in multi-agent Bayesian bandit problems, where strategic agents collectively solve the same bandit instance. While multiple agents can accelerate learning by sharing information, 
strategic agents might prefer to free-ride and avoid exploration. 
We consider a setting with persistent agents that participate in multiple time periods.
This is in contrast to most previous works on incentives in multi-agent MAB, which assume short-lived agents, namely each agent has a single decision to make and optimizes their expected reward in that single decision. As in the multi-agent MAB model with incentives, our model does not have monetary transfers, and the only incentives are through information sharing.

We propose \texttt{CAOS}, a mechanism that sustains collaboration as a Nash equilibrium while achieving strong regret guarantees. 
Our results demonstrate that collaborative exploration can be sustained purely through information sharing, achieving performance close to that of fully cooperative systems despite strategic behavior.

\end{abstract}

\section{Introduction}

The multi-armed bandit (MAB) framework is a central model for sequential decision-making under uncertainty, capturing the fundamental trade-off between exploration and exploitation \cite{Lattimore_Szepesvari_2020,slivkins2019introduction}. In many modern applications, learning is not performed by a single decision-maker but by multiple agents operating in parallel solving the same MAB instance. Such multi-agent systems can significantly accelerate learning by pooling observations, enabling faster identification of optimal actions. At the same time, their setting introduces a fundamental tension: while exploration benefits the system as a whole, each individual agent prefers to exploit current knowledge rather than incur the cost of exploration.

A prominent line of work studies how incentives play a role in this setting. They consider a principal which coordinates learning across multiple agents through information design \citep{kremer2014implementing,mansour2015bayesian,mansour2016bayesian}.  In these models, a sequence of short-lived myopic agents interacts with a principal, each acting only once. The principal leverages her informational advantage to recommend actions that are incentive-compatible for these myopic agents, thereby effectively guiding exploration.
However, this approach relies critically on the assumption that agents are short-lived and optimize only their immediate reward. In many real-world settings, the same agents interact repeatedly over multiple time steps. In such a setting, the agents can accumulate their own information, and can condition their future behavior on past interactions. In such environments, the principal might no longer hold an informational advantage, and the problem of inducing exploration becomes fundamentally more challenging.

Our persistent-agent setting arises naturally in a variety of applications. 
In navigation systems, users contribute traffic data that improves route recommendations, but have little incentive to explore less certain routes themselves.
In distributed clinical trials \citep{Villar2015-wb,Flores2021-la}, medical institutions repeatedly collect and share data to improve treatment policies, yet each institution may prefer to rely on the experimentation of others.
In the emerging agentic economy \citep{DBLP:journals/sigecom/ImmorlicaLS24-gen-ai-economy,DBLP:journals/cacm/RothschildMHDGIJLSV26-agentic-economy} autonomous LLM agents perform tasks on behalf of users — writing code, conducting research — and could benefit from sharing exploration outcomes with other agents tackling similar problems. However, each agent has an incentive to exploit knowledge generated by others rather than explore.
In all these settings, agents control their own information and interact repeatedly, making strategic behavior unavoidable.

%In the absence of a mechanism to regulate information sharing, such systems can perform poorly. 
Unfortunately, mechanisms that completely share all information might perform poorly.
%The classic work of 
\citet{bolton1999strategic} show that when there is complete information sharing,
%when information is a public good, 
strategic agents tend to under-invest in exploration, leading to inefficient outcomes. \citet{jung2020quantifying} show that a free-rider agent incurs horizon-free regret by relying on the exploration of others. This highlights a fundamental tension—while collaboration enables faster learning, strategic behavior can entirely undermine it.

% In this paper, we study a Bayesian multi-agent bandit setting with strategic agents. Agents interact over multiple time steps, each selecting an arm and observing a reward. They may choose what information to share with others, and cannot be incentivized through monetary transfers or binding contracts. The only tool available to the mechanism designer is control over information flow: what is shared, with whom, and when.
\textbf{Our Contributions.}
In this paper, we study a Bayesian multi-agent bandit setting with strategic agents. The agents interact over multiple time steps. In each time step each agent selects an arm and observes a reward. Agents may choose what information to share with others, and cannot be incentivized through monetary transfers or binding contracts.
We assume that environment-related information — such as chosen arms and realized rewards — is verifiable, while agents retain full discretion over if and how they communicate it.
The only tool available to the mechanism designer is control over information flow: what is shared, with whom, and when.

Our approach is based on using information as an incentive. We propose \algnameShort{} (Collaborating Agents with Optimistic Stopping), a mechanism in which agents dynamically decide whether to remain in the collaborative group.
\algnameShort{} maintains a base algorithm \AlgA{} that it aims to have all agents follow.
At each time step, agents evaluate their expected future reward using an optimistic estimate that assumes collaboration continues. If this value is no better than acting independently, the agent opts out and switches to a single-agent strategy; otherwise, she remains active and shares information.
A carefully designed communication protocol ensures that deviations are detected and that all active agents maintain a common view of the system.
%
%\subsection{Our Contributions}
% Under our interaction protocol, in each time step every agent selects an arm and observes the resulting reward, after which agents communicate with one another.
%
% We show that \algnameShort{} achieves both strategic stability and strong performance guarantees:
%We propose \algnameShort{}, a mechanism for strategic multi-agent bandits that uses information flow as the sole incentive tool. 
Our main contributions are as follows.
\begin{itemize}[leftmargin=*]
    \item \textbf{Problem Formulation:} We introduce a model of multi-agent Bayesian bandits with persistent strategic agents, where collaboration must be sustained without monetary transfers or binding contracts. The only mechanism available is the control of information sharing, leading to a repeated game in which agents decide dynamically whether to participate in collaborative exploration.
    \item \textbf{Multi-agents Mechanism} We propose \algnameShort{}, a mechanism for strategic multi-agent bandits that uses information flow as the sole incentive tool. We show the following properties for \algnameShort{}:
    \begin{itemize}[leftmargin=*]
       \item \textbf{Equilibria:} Following \algnameShort{} constitutes a \emph{Nash equilibrium}; no agent can benefit by unilaterally deviating from the prescribed strategy. Moreover, we show that a simple extension of \algnameShort{} forms a \emph{subgame perfect equilibrium}.
    % \item \textbf{Regret Guarantee:} For any symmetric algorithm \AlgA{}, \emph{the regret incurred under \algnameShort{} is no greater than the regret achieved when all agents follow \AlgA{} without deviation.}
    % For example, UCB achieves $\Expect{O(K\log T/(m\Delta))}$ and Thompson Sampling achieves $O(\sqrt{KT/m})$, where $T$ is the time horizon, $K$ arms, $m$ agents, and $\Delta$ is the gap.
    % We further demonstrate that low regret is attainable for asymmetric algorithms, such as asymmetric Successive Elimination (regret of $\Expect{O(K\log T/(m\Delta))}$).
    \item \textbf{Regret Guarantee:} For any symmetric algorithm \AlgA{}, \emph{the regret under \algnameShort{} is no greater than the regret when all agents follow \AlgA{} without any deviation}. Concretely, UCB achieves $\mathbb{E}[O(K\log T/(m\Delta))]$ and Thompson Sampling achieves $O(\sqrt{KT/m})$, where $T$ is the time horizon, $K$ the number of arms, $m$ the number of agents, and $\Delta$ the suboptimality gap. We further show that low regret is attainable beyond symmetric algorithms — for instance, asymmetric Successive Elimination achieves $\mathbb{E}[O(K\log T/(m\Delta))]$.
    \end{itemize}
\end{itemize}

    % \item \textbf{Symmetric Algorithms:} When the underlying multi-agent algorithm $\AlgA$ is symmetric (i.e., prescribes the same action to all agents), \algnameShort{} preserves its performance guarantees. In particular, it achieves the same regret as $\AlgA$ in the non-strategic setting (e.g., for UCB).
    % \item \textbf{Asymmetric Algorithms:} For general (possibly asymmetric) algorithms, we show that each agent achieves at least the performance of the worst-performing agent under $\AlgA$ at each time step. We further demonstrate that this guarantee remains meaningful for standard algorithms such as Successive Elimination, yielding near-optimal regret bounds.

Together, these guarantees show that \algnameShort{} simultaneously ensures strategic compatibility and low regret, nearly matching the performance of fully collaborative systems.

\subsection{Related Work}

\citet{kremer2014implementing}, \citet{mansour2015bayesian} study a setting in which a principal seeks to minimize regret in a Bayesian bandit problem. At each step, the principal interacts with a new agent and recommends an action. This setup introduced a tension similar to ours: while each agent prefers to exploit the current knowledge to maximize their own payoff, the principal is also interested in exploration to improve future decisions.
To address this, they introduce the notion of a Bayesian incentive-compatible (BIC) strategy, ensuring that agents are always incentivized to follow the principal’s recommendations. \citet{mansour2016bayesian} extend this framework to allow for $m$ agents at each step; however, as in the original model, a new set of $m$ agents arrives in every time step.
\citet{DBLP:journals/ior/SimchowitzS24-Incentives-rl} study this setting where each agent arrives exactly once and extend it to reinforcement learning (MDPs).
\citet{banihashem2026banditsociallearningexploration} extend the model by dividing the horizon into episodes, with a new agent arriving at the start of each episode. When episodes are of length one, this recovers the setting of prior work.
These works differ from ours in that we assume the same set of agents participates throughout the entire time horizon. This introduces additional challenges: agents interact repeatedly and can influence one another's behavior, creating strategic interdependencies absent in the short-visit setting.
% Moreover, since agents are present for the full horizon, they have their own incentive to explore---yet their exploratory actions also affect the experiences of other agents, further complicating the mechanism design.

In game theory, cooperation refers to settings where agents form coalitions and enter binding agreements, which differs from our non-cooperative setting in which each agent acts independently \citep{a-course-in-game-theory}. 
\citet{krishnamurthy2026creatorincentivesrecommendersystems} studied cooperation in repeated games with multi-agent MABs, using binding agreements.
We note that while ``cooperation'' carries a specific meaning in game theory, our setting nonetheless incentivizes collaborative behavior.

The principal--agents setting with monetary transfers has been studied previously \citep{Ramakrishnan-k2024collaborative}. In our setting, by contrast, incentivization relies solely on controlling the flow of information.

A large body of work studies non-strategic multi-agent bandit problems, in which agents fully collaborate and share information to accelerate learning \citep{Cesa-Bianchi-cooperation,ChakrabortyCDJ17,baron-coop-mab-graph,Martinez-RubioK19-ucb, dubey2020cooperative, LandgrenSL21-ucb, DBLP:conf/icml/Lancewicki0M22, wang2022achieving-ucb,yang2024cooperative, zhang2025nearoptimal, barnea2026provablecooperativemultiagentexploration} (see \cite{zhang2021multi} for a survey). These works demonstrate substantial gains from parallel exploration. Our work complements this line by studying how to recover such benefits in the presence of strategic behavior.

\section{Problem Formulation}

\begin{algorithm}[t]
\floatname{algorithm}{Protocol}
\caption{Interaction Protocol}
\label{alg:interaction-protocol}
\begin{algorithmic}[1]
\Require Number of arms $K$, horizon $T$, prior $Q$, unknown instance $\mathcal{I} \sim Q$
\For{$t = 1,2,\dots,T$}
    \State Each agent $i$ selects an arm $a_i^t \in [K]$, and observes reward $r_i^t \sim \mathcal{I}(a_i^t)$
    \State The agents communicate with each other
\EndFor
\end{algorithmic}
\end{algorithm}

\paragraph{Bayesian Multi-Armed Bandit.}
We consider a multi-agent Bayesian multi-armed bandit (MAB) setting in which the environment is drawn from a prior distribution. Let $Q$ be a prior over a set of problem instances, where each instance $\mathcal{I}$ specifies a reward distribution for every arm. At the beginning of the interaction, an instance $\mathcal{I} \sim Q$ is sampled once and remains fixed throughout the horizon.
At each time step $t = 1,\dots,T$, each agent $i \in [m]$ selects an arm $a_i^t \in [K]$ and receives an i.i.d.~reward $r_i^t(a_i^t) \sim \mathcal{I}(a_i^t)$.
The Bayesian regret of agent $i$ is defined as
\[
\regret_i = \mathbb{E}_{\mathcal{I} \sim Q} \left[
\mathbb{E} \left[
\sum_{t=1}^T \left( \max_{a \in [K]} \mathbb{E}[r_i^t(a)] - \mathbb{E}[r_i^t(a_i^t)] \right)
\,\middle|\, \mathcal{I}
\right]
\right].
\]

% The expected reward of agent $i$ from time $t=1$ is denoted
% \[
% V_i = \Expect{\sum_{t=1}^T r^t_i(a^t_i)}.
% \]

\paragraph{Interaction Protocol.}
At each time step, every agent decides which arm to choose, and what to communicate to other agents (see \Cref{alg:interaction-protocol}).
At each time step, agents may exchange messages.
Our focus is on incentivization through the information flow.
We assume that environment-related information is verifiable; in particular, chosen actions and realized rewards are verifiable (e.g., the environment produces signatures), but agents may withhold any kind of information.
We do not assume truthfulness beyond this: strategic agents may choose what to share and may lie about the information they hold. This distinction is central to the subgame perfect equilibrium analysis.

% A strategy $\pi_i$ for agent $i$ maps the current time $t$ and posterior $q_t^i$ to: an arm $a_i^t$, a set of agents with whom to share the chosen arm, after observing rewards and others' actions, a set of agents with whom to share her reward.
A \textbf{strategy} $\pi_i$ for agent $i$ maps the current time $t$ and history $\calH^t_i$ to: an arm $a_i^t$, and communication policy.
We denote a strategy profile by $\pi = (\pi_i)_{i \in [m]}$. The cumulative reward from time step $t$ onward of agent $i$ is denoted \[
V_i^t(\pi) = \sum_{s=t}^T r^{s}_i(\pi(s,\calH_i^{s})_i).
\]
We use $V_i(\pi)$ to indicate $V_i^1(\pi)$.

We denote by $\pi_{-i}$ the strategy profile of all agents except $i$.
We denote by $V_i(\pi'_i, \pi_{-i})$ the reward of agent $i$ when she plays $\pi'_i$ and the rest play $\pi_{-i}$.

\paragraph{Equilibria.}
A strategy profile $\pi = (\pi_i)_{i \in [m]}$ is a Nash equilibrium if for every agent $i \in [m]$ and every alternative strategy $\pi'_i$,
\[
\mathbb{E}_{\mathcal{I} \sim Q} \left[ V_i(\pi) \right]
\;\ge\;
\mathbb{E}_{\mathcal{I} \sim Q} \left[ V_i(\pi'_i, \pi_{-i}) \right].
\]
It is a \emph{subgame perfect equilibrium} if the same condition holds after every possible history.

\paragraph{Algorithms.}
We assume to have two known algorithms:
(i) a multi-agent algorithm $\AlgA^\C$ for every $\C \subseteq [m]$, and
(ii) a single-agent algorithm $\AlgB$.

We denote by $\AlgAFull{\calH}{\C}{t} \in [K]^{\abs{\C}}$ the vector of arms selected by $\AlgA$ at time $t$ under history $\calH$ if the set of agents is $\C$, and by $\AlgBFull{\calH}{t} \in [K]$ the arm selected by $\AlgB$. We denote $\pi_\AlgB$ the strategy of all agents playing $B$ and not sharing, and $\pi_{\AlgA}$ to be the strategy of playing $\AlgA$ and always share. 
We assume $\AlgB$ is the strategy that each agent plays if she is in a single agent environment, i.e the strategy profile $\pi_\AlgB$ is a Nash-equilibrium.

\section{The CAOS Mechanism}
\label{sec:caos}

Our goal is not merely to establish the existence of an equilibrium, but to design an equilibrium that achieves strong regret guarantees. Indeed, equilibria always exist in our setting. For example, if all agents play the single-agent optimal algorithm and do not share any information, this constitutes a Nash equilibrium.
Another example is when a single agent runs the optimal single-agent algorithm and broadcasts its observations to all others, while every other agent free-rides — sharing nothing and contributing no exploration of their own. This too constitutes a Nash equilibrium: no agent can unilaterally improve her rewards by deviating. 
%Yet the individual regret remains high.

Such equilibria fail to exploit the potential benefits of collaboration and may result in suboptimal collective performance. To address this, we aim to construct an equilibrium that achieves the performance of a target multi-agent algorithm $\AlgA$, which assumes that agents are non-strategic and always collaborate. Our challenge is to sustain such collaboration in the presence of strategic behavior.

Our mechanism, Collaborating Agents with Optimistic Stopping (\algnameShort{}), achieves this by combining information sharing with a dynamic participation rule. Informally, \algnameShort{} maintains at every time step a set of \emph{active} agents that continue to collaborate. Active agents follow the multi-agent algorithm $\AlgA$ and share information with one another. Agents that stop being active switch permanently to the single-agent algorithm $\AlgB$ and do not share their actions or rewards.

Our policy decides whether an agent remains active by using a recursive procedure that we call \emph{Optimistic Expected Reward} (\texttt{OER}). At a high level, \texttt{OER} computes for every agent the expected continuation reward she would obtain if all agents continue according to \algnameShort{} from the current history onward. The \algnameShort{} mechanism then compares this continuation value to the continuation value of stopping and switching to $\AlgB$.

\subsection{The CAOS protocol}

We begin with a short operational description of \algnameShort{}. We denote by $\pi$ the strategy profile in which all agents follow \algnameShort{} (\Cref{alg:caos-main}). The formal algorithm is given at \Cref{alg:caos}.

\begin{algorithm}
\caption{Collaborating Agents with Optimistic Stopping (\algnameShort{}) -- (informal, see \Cref{alg:caos})}
\label{alg:caos-main}
\begin{algorithmic}[1]
\State \textbf{Input:} time step $t$, history $\calH^t$, agent $i$.
\State $\rho \gets OER(t,\calH^t)$ \Comment{Optimistic continuation rewards }

\If{$\rho_i = \text{value of switching to }\AlgB$}
    \State Play according to $\AlgB$; do not share any information.
    \State \Return
\EndIf
\State $\C_t \gets \curly{i\colon\; \rho_i > \text{value of switching to $\AlgB$}}$
\State Play the recommended action of $\AlgA$ $a_i^t$ and observe reward $r_i^t$.

\State Share $a_i^t$ with agents in $\C_t$ and observe their actions.

\State $\mathcal{W}_t \gets \{j \in \C_t \mid j \text{ shared her action and followed } \AlgA\}$

\State Share reward $r_i^t$ with $\mathcal{W}_t$ and observe their rewards.

\State Share all newly received rewards with $\mathcal{W}_t$.

\end{algorithmic}
\end{algorithm}

For the presentation of \algnameShort{} in this section, we assume access to such an \texttt{OER} oracle. Later in the section we define \texttt{OER} formally and show that, under compliant play, it indeed computes the correct continuation value.

The algorithm works as follows. At the beginning of time step $t$, each agent evaluates her continuation value using \texttt{OER}. If this value equals the continuation value of switching to $\AlgB$, then the agent stops collaboration and from now on behaves as a single agent. 
% (This behaves as a stopping time \citep{peskir2006optimal}.)
Otherwise, she remains active for that time step, follows the recommendation of $\AlgA$, and participates in the communication protocol.

The communication protocol has three parts. First, active agents share the arm they played. This allows every active agent to verify that the others indeed followed the action prescribed by $\AlgA$. Second, only agents that passed this verification exchange their observed rewards. Third, every active agent forwards all newly received rewards to the other active agents. The purpose of this final step is to guarantee that all active agents end the time step with the same information, and therefore with the same posterior and the same continuation problem in the next time step. (The justification for each of these communication parts is discussed in \Cref{sec:nash}).

\subsection{The OER procedure}

We now define the Optimistic Expected Reward (\texttt{OER}) procedure formally. Recall that the set of active agents is determined from the history.

\begin{definition}\label{def:C_t}
Given a history up to time $t$, $\C_{t-1}$ is defined to be the set of agents that, up to time $t-1$, played according to $\AlgA$ and shared their reward as required by \algnameShort{}.
\end{definition}

% Thus, by definition, once an agent leaves $\C_{t-1}$ she can never re-enter it.

The \texttt{OER} procedure recursively evaluates the continuation reward of every agent under the optimistic assumption that, among the currently active agents, the largest subset that all prefer continuing indeed keeps collaborating.

\begin{algorithm}
\caption{Optimistic Expected Reward (\texttt{OER}) -- (Informal, see \Cref{alg:OER})}
\label{alg:OER-main}
\begin{algorithmic}[1]
\Require time $t$, history $\calH^t$.

\If{$t = T+1$}
    \State \Return $0$ for all agents
\EndIf

\State Let $\C_{t-1}$ be the set of currently active agents (from the history).

\For{candidate sets $\C \subseteq \C_{t-1}$ in decreasing size}
    \State Compute $\rho$ as:
    \begin{itemize}
        \item For $i \in \C$: immediate expected reward under $\AlgA$ if $\C$ plays it, plus continuation value given by $OER(t+1,\cdot)$
        \item For $i \notin \C$: value of switching to $\AlgB$
    \end{itemize}

    \If{all $i \in \C$ prefer continuing over switching to $\AlgB$}
        \State \Return $\rho$
    \EndIf
\EndFor
\end{algorithmic}
\end{algorithm}

A few remarks are in order. First, the procedure is recursive: for agents in the candidate set $\C$, it evaluates the immediate expected reward under $\AlgA$ plus the continuation value returned by \texttt{OER} at the next history. Agents outside $\C$ are assumed to stop immediately and obtain the continuation value of $\AlgB$.

Second, \texttt{OER} is \emph{optimistic} in the following sense. It scans subsets of $\C_{t-1}$ from largest to smallest and returns the first subset for which every member strictly prefers continuation over switching to $\AlgB$. Hence, among all self-sustaining collaborating sets, it selects the largest one. This maximizes the amount of collaboration consistent with individual incentives.

Finally, note that \texttt{OER} always returns a continuation value at least as large as $\E[V_i^t(\pi_\AlgB)\mid \calH^t]$ for every agent $i$, since the candidate set $\C=\emptyset$ is always feasible. Therefore, in \Cref{alg:caos-main}, the event
\[
\rho_i = \E[V_i^t(\pi_\AlgB)\mid \calH^t]
\]
exactly captures the case in which agent $i$ should stop.

\begin{remark}
    The \texttt{OER} algorithm as stated is fully optimistic: it attempts to retain the largest group of collaborating agents, at the cost of computational efficiency. An efficient variant, in which only the empty and full groups are checked (i.e., any single agent that stops causes all others to stop too, see \Cref{alg:EOER}), retains the optimism assumption and achieves the same regret guarantees.
\end{remark}

We next show that if all agents follow \algnameShort{}, then \texttt{OER} indeed computes the correct continuation reward.

\begin{lemma}\label{lem:OER_asymmetric}
Assume all agents follow $\pi$ (\Cref{alg:caos-main}). For every agent $i$, time $t$, and history $\calH^t$,
\[
    \E[V_i^t(\pi)\mid \calH^t] = OER(t,\calH^t)_i.
\]
\end{lemma}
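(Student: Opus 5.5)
We prove the claim by backward induction on $t$, from $t=T+1$ down to $t=1$, simultaneously for all histories $\calH^t$ and all agents $i$. The base case $t=T+1$ is immediate: $V_i^{T+1}(\pi)=0$ because the sum defining it is empty, and \texttt{OER} returns $0$ for all agents.

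For the inductive step, fix $t\le T$ and a history $\calH^t$. Assume the statement holds at time $t+1$ for every history. Let $\rho=OER(t,\calH^t)$, and let $\C^\star\subseteq\C_{t-1}$ be the candidate set at which \texttt{OER} returns. We first identify who is active under $\pi$ at time $t$.

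\begin{itemize}
\item Every agent computes the same $\rho$, since \texttt{OER} depends only on $t$ and on the common history of the active agents. Here we use that \algnameShort{}'s forwarding step guarantees all active agents share the same information.
\item Agents in $\C^\star$ satisfy $\rho_i>\E[V_i^t(\pi_\AlgB)\mid\calH^t]$, so they remain active.
\item Agents outside $\C^\star$ have $\rho_i$ equal to the $\AlgB$ value, so they stop.
\end{itemize}

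Hence the set $\C_t$ computed in \Cref{alg:caos-main} equals $\C^\star$.

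Next we handle the two cases.

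\textbf{Agents $i\notin\C^\star$.} Such an agent switches permanently to $\AlgB$ and never shares. Her rewards from time $t$ onward therefore depend only on her own play of $\AlgB$, started from her posterior at $\calH^t$. This is exactly the quantity $\E[V_i^t(\pi_\AlgB)\mid\calH^t]$ that \texttt{OER} assigns to her. One must check that the other agents' later behaviour cannot affect her, and it cannot: she receives no further information from them.

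\textbf{Agents $i\in\C^\star$.} Decompose
\[
\E[V_i^t(\pi)\mid\calH^t]=\E[r_i^t\mid\calH^t]+\E\big[\E[V_i^{t+1}(\pi)\mid\calH^{t+1}]\,\big|\,\calH^t\big].
\]
\begin{itemize}
\item Under compliant play, every agent in $\C^\star$ plays $\AlgAFull{\calH^t}{\C^\star}{t}_i$. The first term is therefore the immediate expected reward that \texttt{OER} computes for $\C$ at the candidate $\C=\C^\star$.
\item All agents comply, so each of them shares her action, passes verification, and shares her reward. Consequently $\C_t$ as given by \Cref{def:C_t} at time $t+1$ equals $\C^\star$, and the resulting $\calH^{t+1}$ is exactly the next history over which \texttt{OER} takes its expectation.
\item By the induction hypothesis, $\E[V_i^{t+1}(\pi)\mid\calH^{t+1}]=OER(t+1,\calH^{t+1})_i$. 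Taking the expectation over the time-$t$ rewards, with respect to the posterior given $\calH^t$, recovers the recursion inside \texttt{OER}.
\end{itemize}

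The main obstacle is bookkeeping about the histories. The history seen by agent $i$ includes private information: her own rewards, and for stopped agents, observations not shared with others. We must check that \texttt{OER} and the conditional expectations are taken with respect to the same information, and that the next-step history $\calH^{t+1}$ generated by compliant play is precisely the one \texttt{OER} recurses on. I would first formalize $\calH^t$ as the common history of the active set, which is well defined thanks to the forwarding step. I would then argue that each stopped agent's private $\AlgB$ observations are independent of the active agents' continuation given the instance, so conditioning on them does not change the active agents' values. A secondary subtlety is ties and the strict inequality in \texttt{OER}. These are resolved by the observation that $\C=\emptyset$ is always feasible, so \texttt{OER} is well defined.
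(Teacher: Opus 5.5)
Your proposal is correct and follows essentially the same route as the paper: backward induction on $t$. Agents outside $\C_t$ get exactly the $\AlgB$ continuation value, since they never re-enter and receive no further information, and agents in $\C_t$ are handled by the tower rule plus the induction hypothesis at $\calH^t\cup\curly{\boldsymbol{a}\colon\boldsymbol{r}}$ with $\boldsymbol{a}=\AlgAFull{\calH^t}{\C_t}{t}$. One caution on your final paragraph: a stopped agent's private $\AlgB$ observations are independent of the active agents' rewards only given the instance, not given $\calH^t$ (they carry information about the instance), so conditioning on them would in general change the active agents' values; you do not need that claim, because once $\calH^t$ is the active agents' common history, no agent in $\C_t$ ever conditions on that private data.
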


\begin{proof}[Proof sketch]
Since all agents followed $\pi$ up to time step $t-1$, every agent in $\C_{t-1}$ has received exactly the same information: by the last communication step in \algnameShort{}, all rewards observed by active agents are forwarded to all currently active agents.
Therefore, all agents in $\C_{t-1}$ have the same history $\calH^t$, and hence compute the same set $\C_{t-1}$ and the same output of $OER(t,\calH^t)$.

In particular, they all compute the same set $\C_t$ selected by \texttt{OER}. Because each active agent follows the recommendation of \texttt{OER} when deciding whether to remain active, the set $\C_t$ chosen by the procedure is exactly the set of agents that truly continue with $\AlgA$ at time $t$. Agents outside $\C_t$ switch to $\AlgB$.

Thus, the actual play from time $t$ onward is exactly the play assumed by the \texttt{OER} recursion: agents in $\C_t$ obtain their expected immediate reward under $\AlgA$, while agents outside $\C_t$ obtain the continuation value of $\AlgB$. Since this is true for every $t'\ge t$, the recursion holds and the returned reward is indeed exactly $\E[V_i^t(\pi)\mid \calH^t]$.
\end{proof}

The lemma formalizes the key fixed-point property of the mechanism: under compliant play, the optimistic recursion is not merely hypothetical, but exactly describes the true continuation rewards.

\subsection{CAOS is a Nash equilibrium}\label{sec:nash}

We now show that the strategy profile induced by \algnameShort{} is stable.

\begin{theorem}\label{thm:caos_nash}
The strategy profile $\pi$ in which all agents follow \algnameShort{} is a Nash equilibrium.
\end{theorem}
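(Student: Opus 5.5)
We argue by backward induction on $t$ that for every agent $i$, every history $\calH^t$ reachable when the others follow $\pi_{-i}$, and every deviation $\pi'_i$,
\[
\E[V_i^t(\pi)\mid \calH^t] \;\ge\; \E[V_i^t(\pi'_i,\pi_{-i})\mid \calH^t].
\]
Taking $t=1$ and averaging over $\mathcal{I}\sim Q$ gives the theorem. By \Cref{lem:OER_asymmetric}, the left-hand side equals $OER(t,\calH^t)_i$. So the task is to bound the value of any deviation by the \texttt{OER} value.

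The first step is to classify deviations at time $t$ and show that every detectable deviation reduces $i$ to single-agent play. Consider agent $i$ who is active, i.e. $i\in\C_t$.

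\emph{Not reporting her action, or playing an arm other than $\AlgA$'s recommendation.} Every other active agent excludes her from $\mathcal{W}_t$, and by \Cref{def:C_t} she leaves the active set permanently. From then on she receives no further information from $\pi_{-i}$.

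\emph{Withholding or misreporting her reward.} Verifiability of rewards means a false report is detected. Withholding likewise removes her from $\C$ at the next step. In either case she keeps the rewards of $\mathcal{W}_t$ received at time $t$, but nothing afterwards.

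\emph{Withholding forwarded rewards.} The last communication round ensures every member of $\mathcal{W}_t$ already received the others' rewards directly from the third party. Withholding therefore changes nobody's information except possibly her own exclusion.

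So after any detectable deviation, agent $i$ faces a single-agent problem whose posterior is her current information. By the assumption that $\pi_\AlgB$ is a Nash equilibrium ($\AlgB$ is the optimal single-agent strategy), her continuation is at most the value of switching to $\AlgB$ at that point. This is where the two-phase communication matters: a deviator cannot obtain others' time-$t$ rewards without first having played the prescribed arm.

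The second step is to combine the cases and finish the induction. The undetectable deviations are switching to $\AlgB$ (or acting alone) right now. When she stops, her value is at most $\E[V_i^t(\pi_\AlgB)\mid\calH^t]$, which is at most $OER(t,\calH^t)_i$ because the empty candidate set is always feasible. The remaining deviation plays the $\AlgA$ arm at $t$ and shares honestly (otherwise she is caught), then deviates later. Her time-$t$ expected reward then matches the recursion, and the resulting history is consistent with $\pi$. By the induction hypothesis, her continuation from $t+1$ is at most $OER(t+1,\cdot)_i$. Summing gives exactly the quantity \texttt{OER} assigns her in the set $\C_t$.

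The third step handles agents $i\notin\C_t$ (non-active agents). Others exclude them, so they face a single-agent problem and $\AlgB$ is optimal for them.

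The hard case is a deviation that plays the prescribed arm, obtains the others' rewards at time $t$, and only then defects. Its value is the time-$t$ reward plus the $\AlgB$-value at $t+1$ computed with the enlarged information. To dominate it, note this is a valid candidate continuation at $t+1$. The recursion at $t+1$ returns at least $\E[V_i^{t+1}(\pi_\AlgB)\mid\calH^{t+1}]$, which uses the same enlarged information. So $OER(t,\cdot)_i$, which includes this continuation, is at least the deviation value.

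A subtle point is that other agents' continuation sets may depend on $i$'s behavior. Because the deviation is observable only through exclusion, I would handle this by showing the other agents' subsequent play is identical to the \texttt{OER} computation on the induced history.
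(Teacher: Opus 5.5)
Your proposal is correct and follows essentially the paper's argument. You identify the on-path value with $OER$ via \Cref{lem:OER_asymmetric}, and you note that any deviation in the arm or in sharing leaves agent $i$ isolated, so her continuation is bounded by $\AlgB$'s value, which \Cref{pro:OER_moreB} bounds by $OER$. You then dominate the ``play the prescribed arm, collect the others' rewards, then defect'' deviation using $OER(t+1,\cdot)_i \ge \E[V_i^{t+1}(\pi_\AlgB)\mid\cdot]$ on the enlarged history, and inactive agents reduce to the $\pi_\AlgB$ equilibrium.

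The difference is only in framing: you use explicit backward induction, while the paper fixes the first deviation and argues by contradiction. You should state your induction only for on-path histories, since that is where \Cref{lem:OER_asymmetric} applies and all your inductive step actually uses.
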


\begin{proofsketch}
Assume toward contradiction that there exists an agent $j$ and an alternative policy $\pi'_j$ such that agent $j$ strictly improves her expected total reward by deviating from $\pi$ to $\pi'_j$ while all other agents follow $\pi$. Fix some history that they differ on, and denote $t$ to be the first time at which $\pi'_j$ differs from $\pi$.
We distinguish between two cases.

\textbf{Case 1: Agent $j\notin C_t$.}
By definition of \algnameShort{}, she no longer shares information and no active agent shares information with her. Moreover, because all agents in $\C_{t}$ have the same history by the beginning of time step $t$, they all know that $j$ is inactive and therefore exclude her from future collaboration.

Hence, from time step $t$ onward, agent $j$ effectively acts as a single agent. Among all such single-agent continuations, $\AlgB$ is optimal by assumption. Therefore deviating from $\pi$ cannot improve her reward in this case.

\textbf{Case 2: Agent $j\in C_t$.}
Since $j$ is active, under $\pi$ she follows $\AlgA$ and the communication protocol. There are two ways in which $\pi'_j$ can first differ from $\pi$: either in the action she plays, or in the information she shares.

\emph{Deviation in the played arm.}
Suppose first that agent $j$ plays an arm different from the one prescribed by $\AlgA$. The first communication step of \algnameShort{} requires agents to share their played arm before rewards are shared. Hence the other active agents detect immediately that $j$ did not follow $\AlgA$, and therefore exclude her from the set $\mathcal W_t$ with whom rewards are shared.

This ordering is crucial. If rewards were shared before actions were verified, then an agent could exploit in the current time step, still receive the others' exploration rewards, and only afterwards leave the collaboration. Such a deviation would invalidate the optimistic recursion underlying \texttt{OER}. \algnameShort{} rules this out by revealing actions first. 

Once deviation in the played arm is detected, agent $j$ can no longer benefit from collaborative information sharing in time step $t$, and from that point on she is effectively reduced to a single-agent continuation. Among all such continuations, $\AlgB$ is optimal. Therefore the best deviation of this kind is no better than stopping and switching to $\AlgB$. But by \Cref{lem:OER_asymmetric}, \algnameShort{} already compares the continuation reward of remaining active to the continuation reward of $\AlgB$, and chooses the better of the two. Thus agent $j$ cannot gain by deviating in the played arm. 

\emph{Deviation in information sharing.}
It remains to consider the possibility that agent $j$ plays the prescribed arm but deviates in the sharing phase. A priori, such a deviation may look beneficial: by selectively sharing information, agent $j$ might try to create asymmetric information among the other active agents, and this could potentially alter their future stopping decisions.

The last communication step of \algnameShort{} is designed precisely to prevent this. Every active agent forwards all newly received rewards to all currently verified active agents. Consequently, among agents that continue to be treated as active, the information held at the end of the time step is identical. In particular, all active agents have the same history and therefore the same posterior and the same \texttt{OER} computation in the next time step.

If agent $j$ has already played according to $\AlgA$ in the current time step, then deviating in the sharing phase cannot help her. If she withholds information, she is no longer considered active by the entire active group, and effectively transitions to acting alone from that point onward, receiving no further information from the other agents. Thus, by not sharing, she strictly reduces the information available to her future decisions. In contrast, sharing does not reduce her future option value: she still retains the ability to stop in subsequent time steps, while benefiting from additional information provided by the other active agents. Therefore, after following $\AlgA$ in the current time step, it is weakly optimal for her to share all required information and postpone the stopping decision to the next time step, where she may still choose to stop. Thus, deviating in the sharing phase cannot strictly improve her payoff either.

We conclude that no unilateral deviation can strictly improve the payoff of any agent. Therefore $\pi$ is a Nash equilibrium.
\end{proofsketch}

\begin{remark}
The equilibrium proof highlights the role of each part of the communication protocol. Sharing played arms before rewards eliminates profitable ``free-ride-now'' deviations. Sharing rewards only among verified active agents ensures that only compliant agents enjoy collaborative exploration. Finally, forwarding all newly received rewards guarantees that the active agents maintain a common history, which is essential both for the correctness of \texttt{OER} and for the equilibrium argument.
\end{remark}

\subsection{Subgame Perfect Equilibrium}

The analysis above establishes that $\pi$ is a Nash equilibrium. However, this guarantee applies only to histories that arise when all agents follow $\pi$. In general, off-equilibrium histories may lead to divergence in agents' information, which can break the equilibrium structure.

When agents possess heterogeneous information, they may hold different posteriors and therefore face different continuation incentives. In this case, the notion of the active group is no longer a consensus — each agent maintains her own conception of it. While it would be desirable to reach consensus over both the active group and the posteriors, agents with an informational advantage may be reluctant to reveal it. 
For example, an inactive agent may share information with one active agent but not with another. The latter agent has no way to detect it, since verifiability applies only to environment-generated signals, there is no mechanism to verify that a message was \emph{not} sent. As a result, some agents may remain unaware that others possess superior information, giving those who do hold it little incentive to disclose it.

To address this issue, we extend the mechanism to ensure subgame perfection. The key idea is to detect any deviation from $\pi$ and collapse the system to a fallback equilibrium. Specifically, we augment the mechanism with an initial communication phase in which each agent reports whether she is aware of any past deviation from $\pi$. The full mechanism is given in \Cref{alg:sp-caos}.

The crucial observation (proved in \Cref{lem:psi}) is that any deviation from $\pi$ is observed by at least two agents. Therefore, no agent can conceal such a deviation: if one agent reports it, the other has no incentive to deny it, since the deviation will be revealed regardless. As a result, truthful reporting forms an equilibrium of this communication phase.

If a deviation is detected, all agents switch to playing $\AlgB$ and cease information sharing, which constitutes a Nash equilibrium. Otherwise, the history is consistent with $\pi$, and the original analysis applies. In this case, the equilibrium is subgame perfect.

\section{Regret Guarantees}
\label{sec:regret}

We now analyze the performance of \algnameShort{}. While \Cref{sec:caos} shows that agents have no incentive to deviate, it does not by itself guarantee that the resulting collaboration achieves low regret. The regret of \algnameShort{} depends on the structure of the underlying multi-agent algorithm $\AlgA$.

We first show that when $\AlgA$ is symmetric, \algnameShort{} preserves its performance guarantees. We then show that for general asymmetric algorithms, such a guarantee is impossible, and provide a weaker bound. Finally, we illustrate these results with concrete examples.

\subsection{Symmetric algorithms}

We say that $\AlgA$ is \emph{symmetric} if it prescribes the same action to all active agents at every time step.

\begin{theorem}\label{thm:regret_symmetric}
Assume $\AlgA$ is symmetric. Then for every agent $i$,
\begin{align*}
    \regret_i(\pi) \le \regret_i(\pi_\AlgA).
\end{align*}
\end{theorem}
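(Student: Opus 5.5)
Regret equals $T\cdot\E[\max_a \mu_a]$ minus expected cumulative reward, and the first term does not depend on the strategy profile. So I will prove the equivalent statement that for every agent $i$,
\[
\E[V_i(\pi)] \ge \E[V_i(\pi_\AlgA)].
\]
I will prove a stronger, history-wise claim by backward induction on $t$. The claim is that for every history $\calH^t$ reachable under $\pi$ in which the full set $[m]$ is still active, and every $i$,
\[
OER(t,\calH^t)_i \ge \E[V_i^t(\pi_\AlgA)\mid \calH^t].
\]
Here the right-hand side is computed from the same common history. By \Cref{lem:OER_asymmetric}, the left-hand side equals $\E[V_i^t(\pi)\mid\calH^t]$, and at $t=1$ the claim gives the theorem.

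Symmetry is used first. Since $\AlgA$ prescribes the same arm to every active agent, $\pi_\AlgA$ gives all agents the same expected continuation reward $\E[V_i^t(\pi_\AlgA)\mid\calH^t]$, so agents are interchangeable. The inductive step then splits into two cases.

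\textbf{Case 1: the full set $\C=[m]$ passes the \texttt{OER} test.} Each $i$ gets the immediate expected reward under $\AlgA^{[m]}$, which is exactly the $\pi_\AlgA$ immediate reward. She also gets the expectation, over the next common history, of $OER(t+1,\cdot)_i$. On every next history where the full set is still active, the induction hypothesis bounds this below by $\E[V_i^{t+1}(\pi_\AlgA)\mid \calH^{t+1}]$. Every next history is of this kind: when $[m]$ is retained, all agents play and share, so the next history stays on path with $\C_t=[m]$. Summing gives the claim.

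\textbf{Case 2: the full set fails.} Then some agent $j$ has optimistic continuation no larger than her $\AlgB$ value $\E[V_j^t(\pi_\AlgB)\mid\calH^t]$. The full-set candidate value for $j$ is at least $\E[V_j^t(\pi_\AlgA)\mid\calH^t]$, by the same induction argument applied to the recursion. So $\E[V_j^t(\pi_\AlgA)\mid\calH^t] \le \E[V_j^t(\pi_\AlgB)\mid\calH^t]$. By symmetry this holds for every agent $i$. Since \texttt{OER} always returns at least the $\AlgB$ value (the empty set is feasible), we get $OER_i \ge \E[V_i^t(\pi_\AlgB)\mid\calH^t] \ge \E[V_i^t(\pi_\AlgA)\mid\calH^t]$.

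The main obstacle is making the induction hypothesis cover exactly the right histories. In Case 2 the procedure may settle on a proper subset $\C$, and the resulting histories have fewer active agents, so the induction hypothesis as stated would not apply there. I avoid this by never needing it: Case 2 uses only the fallback bound from the empty set. Case 1 only ever recurses into histories where all of $[m]$ remains active.

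A second subtlety is that the $\pi_\AlgB$ continuation value must be taken conditional on the shared history $\calH^t$, consistent with how \texttt{OER} defines it. I also need the expectations of $\AlgA$ and $\AlgB$ to be comparable agent by agent. For this I will use that under symmetric $\AlgA$ with full sharing all agents hold the same $\calH^t$, so the comparison "every $i$ weakly prefers $\AlgB$" follows from the comparison for one agent $j$. If the $\AlgB$ values differ across agents because of private histories, I will note that on-path histories are common to all agents, by the forwarding step, so they do not differ.
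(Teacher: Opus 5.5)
Your route is the paper's. The paper proves \Cref{lem:regret_asymmetric}: for every history with $\C_{t-1}=[m]$, $OER(t,\calH^t)_i \ge \E[V^t_{\min}(\pi_\AlgA)\mid\calH^t]$. It uses the same backward induction and the same two cases: either the full set is retained, or some agent $j$ has full-set candidate value at most her $\AlgB$ value, and then $OER_i \ge \E[V_i^t(\pi_\AlgB)\mid\calH^t]$ together with $\AlgB$ being common to all agents finishes. The symmetric theorem then follows from $V_{\min}(\pi_\AlgA)=V_i(\pi_\AlgA)$. You run the same induction directly with $V_i$ in place of $V_{\min}$, which is fine.

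One step does not work as you set it up. You restrict the induction hypothesis to histories reachable under $\pi$, and you say Case 2 never needs it. It does. The inequality ``the full-set candidate value for $j$ is at least $\E[V_j^t(\pi_\AlgA)\mid\calH^t]$'' requires $OER(t+1,\calH^t\cup\{\boldsymbol a:\boldsymbol r\})_j \ge \E[V_j^{t+1}(\pi_\AlgA)\mid \calH^t\cup\{\boldsymbol a:\boldsymbol r\}]$ with $\boldsymbol a=\AlgA^{[m]}(\calH^t,t)$. That is, you need the hypothesis at the next histories of the full-set branch. In Case 2 these histories are off path. Under $\pi$ the set chosen at $\calH^t$ is a proper subset, so some agent plays $\AlgB$ and withholds her reward, and a history in which all $m$ agents played $\boldsymbol a$ and shared never occurs. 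As you stated it, the hypothesis therefore does not cover them.

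The repair is to state the claim as the paper does: for every history in which all agents have so far played $\AlgA$ and shared (i.e.\ $\C_{t-1}=[m]$ in the sense of \Cref{def:C_t}), reachable or not. Neither case uses reachability, since \texttt{OER} and the $\AlgA$- and $\AlgB$-values are all computed from one common history. You also need \Cref{lem:OER_asymmetric} only at $t=1$, where the empty history is trivially on path. With this change your proof is complete, and the caveat about private histories in your last paragraph is no longer needed.
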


\begin{proofsketch}
Under symmetry, all active agents receive the same recommendation from $\AlgA$ and therefore face identical continuation problems. In particular, at every history they compute the same \texttt{OER} value and make the same decision whether to continue or stop. Thus, either all active agents continue together or all stop together.

From the perspective of any agent, the optimistic evaluation performed by \texttt{OER} can therefore assume that if she continues, all other agents also continue. Since \texttt{OER} evaluates continuation optimistically, it effectively assumes that collaboration persists until the end of the horizon. As a result, an agent can only benefit from stopping, and \algnameShort{} already compares this option to continuing and selects the better one. This implies that the continuation value under \algnameShort{} is at least as large as under $\AlgA$.
By \Cref{lem:max-reward-min-regret}, maximizing reward is equivalent to minimizing regret, and hence this is also a regret guarantee.
\end{proofsketch}

We leverage this theorem to recover known regret bounds when $\AlgA$ is instantiated with standard symmetric algorithms. In particular, we demonstrate this for UCB, which achieves (\Cref{thm:ucb})
\begin{align*}
    O\!\left(\mathbb{E}\!\left[\sum_{a\in[K]}\frac{\log T}{m\Delta_a}\right] + K\right),
\end{align*}
and Thompson Sampling, which achieves (\Cref{thm:TS})
\begin{align*}
    \tilde O\!\left(\sqrt{KT/m} + K\right).
\end{align*}

% We note that these are the bounds achieved when running these algorithms with non-strategic agents.

\subsection{Asymmetric algorithms}

When $\AlgA$ is asymmetric, different agents may be assigned different roles, and in particular some agents may bear a larger exploration cost than others.
In this case, the strong guarantee of \Cref{thm:regret_symmetric} no longer holds.
Rather than guaranteeing low individual regret, we will only guarantee a related regret notion, which we call $\regret_{\max}$:

\[
 \regret_{\max}(\pi) = \mathbb{E}_{\mathcal{I} \sim Q} \squary{ \Expect{\sum_{t=1}^T (\max_{a\in [K]}\Expect{r^t_i(a)} - \min_i \E\squary{r_i^{t}(\pi(\calH^{t}, t))\mid \calH^{t}}) \mid \mathcal{I}}}.
\]

Notice that $\regret_i(\pi) \le \regret_{\max}(\pi)$ for all $i$, with equality when \AlgA{} is symmetric.  
%Note that in $\regret_{\max}(\pi)$ diffferent $t$ have different minimizing $i$.

We state that \algnameShort{} does guarantee this new regret notion.
\begin{theorem}\label{thm:regret_asymmetric}
For every agent $i$,
% \begin{align*}
%     \E_Q[V_i(\pi)] \ge \E_Q[V_{\min}(\pi_\AlgA)].
% \end{align*}
%
\begin{align*}
    \regret_i(\pi) \le \regret_{\max}(\pi_\AlgA).
\end{align*}

\end{theorem}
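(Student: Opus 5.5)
By \Cref{lem:max-reward-min-regret}, minimizing regret is the same as maximizing expected reward. So the plan is to prove the equivalent reward statement
\[
\E_{\mathcal{I}\sim Q}[V_i(\pi)] \;\ge\; \E_{\mathcal{I}\sim Q}\Big[\sum_{t=1}^T \min_{j}\E\big[r^t_j(\pi_\AlgA(\calH^t,t))\mid \calH^t\big]\Big] =: \E[V_{\min}(\pi_\AlgA)]
\]
for every agent $i$. To prove it, I would introduce a comparison policy $\tilde\pi$ and argue $\E[V_i(\pi)\mid\calH^t]\ge \E[V_i^t(\tilde\pi)\mid\calH^t]\ge V_{\min}^t$ by backward induction on $t$.

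\textbf{The comparison policy.} Let $\tilde\pi$ be the profile in which every agent keeps following $\AlgA^{[m]}$ forever with full sharing. The only difference from $\pi_\AlgA$ is that each agent is allowed to stop and switch to $\AlgB$ whenever that is better for her. The key monotonicity fact is the following. By \Cref{lem:OER_asymmetric}, $\E[V_i^t(\pi)\mid\calH^t]=OER(t,\calH^t)_i$. Since \texttt{OER} scans subsets in decreasing size, for the full active set it either returns continuation of $\C_{t-1}$ or concludes that some member prefers $\AlgB$. I would show by induction on $T-t$ that
\[
OER(t,\calH^t)_i \;\ge\; \max\Big\{\E[V_i^t(\pi_\AlgB)\mid\calH^t],\; \sum_{s\ge t}\E[\min_j r_j^s]\Big\},
\]
where the second term is the min-agent reward under $\AlgA$ run from $\calH^t$. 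The first bound is immediate because $\C=\emptyset$ is always feasible.

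\textbf{The second bound.} If \texttt{OER} keeps the full set, agent $i$ gets her $\AlgA$ immediate reward, which is at least the minimum over agents, plus $OER(t+1,\cdot)_i$. The inductive hypothesis then bounds the latter by the min-continuation. If instead some agent leaves, the stopping agent $j$ receives her $\AlgB$ value, which by the stopping rule is at least her $\AlgA$ continuation, which is at least the min-continuation. For the remaining agents, I would argue that they are either in a smaller self-sustaining set or also at $\AlgB$. I then need $\E[V^t(\pi_\AlgB)\mid\calH^t]$ to dominate the min-continuation for \emph{every} agent. This should follow by symmetry of $\AlgB$ across agents with the same history: all active agents share $\calH^t$, so their $\AlgB$ values are equal, and agent $j$'s $\AlgB$ value already dominates the $\AlgA$ min-continuation.

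\textbf{Main obstacle.} The hardest case is a smaller set $\C\subsetneq\C_{t-1}$ chosen by \texttt{OER}. Here members continue with $\AlgA^\C$, which is a different algorithm from $\AlgA^{[m]}$. Their value is at least their $\AlgB$ value by feasibility, and that $\AlgB$ value equals the common $\AlgB$ value of all active agents. That common value in turn dominates the $\AlgA^{[m]}$ min-continuation, because some agent preferred to leave the full set. So every agent's value is at least the common $\AlgB$ value, which is at least the min-continuation, and the chain closes. Finally, I would take expectations over $\calH^t$ at $t=1$ to conclude $\regret_i(\pi)\le\regret_{\max}(\pi_\AlgA)$.
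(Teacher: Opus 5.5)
Your proposal is correct and follows essentially the same route as the paper. The paper proves, by backward induction, that $OER(t,\calH^t)_i \ge \E[V_{\min}^t(\pi_\AlgA)\mid\calH^t]$, splitting on whether the full set passes the \texttt{OER} check. It uses that some agent failing that check has $\AlgB$-value at least her $\AlgA^{[m]}$-continuation, that $\AlgB$-values coincide across agents, and that $OER_i$ always dominates the $\AlgB$-value, then concludes via \Cref{lem:OER_asymmetric} at $t=1$ and \Cref{lem:max-reward-min-regret}.

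Three small points of presentation. The comparison policy $\tilde\pi$ is never actually used, and you can drop it. The induction should be stated, as in \Cref{lem:regret_asymmetric}, only for histories with $\C_{t-1}=[m]$. And $V_{\min}$ should be written consistently as $\sum_s \min_j \E[r^s_j\mid \calH^s]$, as in your first paragraph, rather than the weaker $\sum_s \E[\min_j r^s_j]$ you use later.
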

The proof of this theorem is deferred to the appendix (see \Cref{thm:regret-asym-appndx}).

\begin{remark}
One can wonder whether individual regret is the right benchmark for asymmetric algorithms.
The next example demonstrates that asymmetric algorithms can be strategically problematic to follow. Even if they achieve low expected individual regret they might simultaneously exhibit high maximum regret, making them ones that no agent would willingly follow.
For example, consider an algorithm $\AlgA$ that with $n \le m$ agents guarantees regret $4R/n$ per agent, where $\AlgB$ achieves regret $R$. Suppose $\AlgA$ selects one agent uniformly at random as a \emph{victim}, who incurs regret $2R$, while each other agent incurs $2R/(n-1)$ (so the expected regret per agent is $4R/n$). Then the victim, once notified, strictly prefers to stop and switch to $\AlgB$. This repeats recursively, so agents drop out one by one at time step $t=1$, and eventually all switch to $\AlgB$, each incurring regret $R$.
This example shows that for asymmetric algorithms, \algnameShort{} cannot in general guarantee each agent the same reward as under $\AlgA$. Despite this limitation, \algnameShort{} still provides a meaningful guarantee.
%particularly when $\AlgA$ is ``fair'' in the sense that all agents receive comparable performance.
\end{remark}

\subsection{Examples}

We now illustrate the implications of \Cref{thm:regret_asymmetric} with two examples. Proofs are in Appendix~\ref{apx:examples}.

\textbf{Fixed-arm explore--exploit.}
Consider the algorithm that assigns each agent a fixed arm during the exploration phase; once the algorithm decides to exploit, all agents switch to the empirically best arm (see \Cref{alg:mafaee}).
\begin{theorem}
The Bayesian regret of $\algnameShort$ when $\AlgA$ is \Cref{alg:mafaee} is bounded by
\[
\tilde{O}\left(T^{2/3} K^{1/3} m^{-1/3}\right).
\]
\end{theorem}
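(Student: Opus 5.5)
By \Cref{thm:regret_asymmetric}, every agent satisfies $\regret_i(\pi)\le \regret_{\max}(\pi_\AlgA)$. It therefore suffices to bound $\regret_{\max}$ for the fixed-arm explore--exploit algorithm of \Cref{alg:mafaee}, run by all $m$ agents with full sharing. Since $\regret_{\max}$ charges, at each step, the instantaneous regret of the worst-off agent, the plan is to bound that worst-case per-step regret separately in the exploration phase and in the exploitation phase.

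\textbf{Choice of parameters.} I will set the exploration phase so that every arm is sampled $N$ times in total. The $m$ agents are assigned fixed arms round-robin, so each arm is covered by roughly $m/K$ agents; if $m<K$, agents cycle through their assigned blocks of arms. The phase therefore lasts about $E=\lceil NK/m\rceil$ rounds.

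\textbf{Exploration cost.} During exploration, some agent may sit on a bad arm at every step. I bound the worst agent's per-step regret by the maximal gap, which is at most $1$ for rewards in $[0,1]$. The exploration cost in $\regret_{\max}$ is then at most $E\approx NK/m+1$.

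\textbf{Exploitation cost.} After exploration, all agents play the empirical best arm $\hat a$, so the problem is symmetric and the minimum over agents equals the common regret. By Hoeffding's inequality and a union bound over $K$ arms, with probability at least $1-1/T$ every empirical mean is within $\sqrt{\log(KT)/N}$ of its true mean. On that event the gap of $\hat a$ is at most $2\sqrt{\log(KT)/N}$. The exploitation cost is thus at most $T\cdot O(\sqrt{\log(KT)/N})$ on the good event, plus $O(1)$ from the failure event.

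\textbf{Balancing.} Minimizing $NK/m + T\sqrt{\log(KT)/N}$ gives $N=(Tm/K)^{2/3}\log^{1/3}(KT)$. This yields $\tilde O(T^{2/3}K^{1/3}m^{-1/3})$, plus an additive $O(1)$ from rounding $E$ up.

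\textbf{Taking the prior expectation.} Each bound above holds conditionally on $\mathcal I$, so the expectation over $\mathcal I\sim Q$ preserves them.

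\textbf{Main obstacle.} The delicate point is interpreting $\regret_{\max}$ correctly: the minimum is over agents' conditional expected rewards given $\calH^t$. So I will take the maximal gap as a deterministic upper bound on the per-step term during exploration, and use conditional Hoeffding at the exploit switch. A second point to check is that \Cref{alg:mafaee} is the algorithm $\AlgA^{[m]}$ applied to the full set of agents. Since \Cref{thm:regret_asymmetric} compares to $\pi_\AlgA$ with all $m$ agents active, bounding this single instantiation suffices, and subgroup versions $\AlgA^\C$ need not be analyzed.
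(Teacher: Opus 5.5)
Your proposal is correct and follows essentially the same route as the paper's proof. You reduce to $\regret_{\max}(\pi_\AlgA)$ via \Cref{thm:regret_asymmetric}, charge at most $1$ per exploration round, use Hoeffding plus a union bound (failure probability $O(1/T)$) to bound the gap of the empirically best arm by $O(\sqrt{\log T/\text{samples}})$, and then balance the two terms. Parametrizing by samples per arm rather than by exploration rounds, and your extra $m<K$ cycling remark, are only cosmetic differences: the paper assumes $m/K$ is an integer, so the cycling case is not needed.
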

This result matches the results of the natural multi-agent extension of the classical explore--then--exploit analysis (see, e.g., \cite{slivkins2019introduction}).

This example highlights the robustness of \algnameShort{}: even when the underlying algorithm is somewhat crude and unfair, the mechanism is able to sustain collaboration and recover the expected non-strategic multi-agent gains.
When $\AlgA$ is ``fair'' in the sense that all agents receive comparable performance, stronger guarantees achieved.

\textbf{Successive Elimination.}
We next consider a standard extension of the Successive Elimination algorithm, where the algorithm assigns different arms from the set of active arms to each agent (see \Cref{alg:SE}).

\begin{theorem}
The Bayesian regret of $\algnameShort$ when $\AlgA$ is \Cref{alg:SE} is bounded by
\[
O\!\left(\E_{\Delta_{\min} \sim Q}\left[\frac{K \log T}{m \Delta_{\min}}\right]\right).
\]
\end{theorem}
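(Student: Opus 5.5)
The plan is to reduce the claim to \Cref{thm:regret_asymmetric} and then bound $\regret_{\max}(\pi_\AlgA)$ for the multi-agent Successive Elimination of \Cref{alg:SE}, instance by instance. By \Cref{thm:regret_asymmetric}, $\regret_i(\pi)\le\regret_{\max}(\pi_\AlgA)$ for every agent $i$. It therefore suffices to show that, conditioned on any instance $\mathcal{I}$ with minimal gap $\Delta_{\min}$, the expected sum over $t$ of the \emph{worst} per-agent instantaneous regret under $\pi_\AlgA$ is $O(K\log T/(m\Delta_{\min}))$. Taking expectation over $\mathcal{I}\sim Q$ then gives the stated bound.

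For the fixed-instance analysis, I would use the standard good event $G$: every arm's empirical mean, computed from the pooled samples of all $m$ agents, lies within $\sqrt{2\log T/n}$ of its true mean after $n$ pulls. By Hoeffding and a union bound over arms and sample counts, $\Pr(G^c)\le O(K m/T^2)$ or similar. On $G^c$ I simply charge at most $T$ total regret, which contributes only $O(1)$ after multiplying by this probability.

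On $G$, the optimal arm is never eliminated, and a suboptimal arm $a$ is eliminated once it has $n_a=O(\log T/\Delta_a^2)$ pooled samples. Since \Cref{alg:SE} assigns the $m$ agents round-robin across the active arms, each active arm receives at least $\lfloor m/|A_t|\rfloor$ samples per round, or one sample every $\lceil |A_t|/m\rceil$ rounds. The key quantity is the max over agents of instantaneous regret, which is at most $\max_{a\in A_t}\Delta_a$, since each agent plays some active arm. So $\regret_{\max}$ is bounded by the number of rounds $\tau$ before all suboptimal arms are eliminated, times the largest gap among arms still active. I would bound it by splitting by phases. While arm $a$ is still active, it gets roughly $m/|A_t|$ pulls per round, so the number of rounds in which an arm with gap $\ge\Delta$ survives is about $\sum_a |A_t| n_a/m$. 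The cleaner route is to charge round $t$'s max-regret, $\max_{a\in A_t}\Delta_a$, to the arm achieving it. The total number of rounds in which any suboptimal arm survives is at most $O\bigl(K\cdot\max_a n_a/m + K\bigr)=O\bigl(K\log T/(m\Delta_{\min}^2)+K\bigr)$, because all $K$ arms share $m$ pulls per round.

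The main obstacle is converting this count into the $1/\Delta_{\min}$ rate rather than $\Delta_{\max}/\Delta_{\min}^2$. Max-regret rounds are charged with the \emph{largest} active gap, so I need large-gap arms to be eliminated early. I would order the arms by gap, $\Delta_{(1)}\ge\Delta_{(2)}\ge\dots$. Arm $(j)$ is eliminated after $O(K\log T/(m\Delta_{(j)}^2))$ rounds, since all arms receive roughly equal sampling while active. During the interval after arm $(j-1)$ is eliminated but before arm $(j)$ is, the max regret per round is at most $\Delta_{(j)}$. Summing gives $\sum_j \Delta_{(j)}\cdot O(K\log T/(m\Delta_{(j)}^2))$, which as a telescoping bound is $O\bigl(K\log T/(m\,\Delta_{\min})\bigr)$. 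If the uniform-sampling step does not hold exactly, because $m$ does not divide $|A_t|$, I would absorb the rounding into the additive $O(K)$ term. Finally, I would add the contribution of $G^c$ and take the expectation over $Q$ to conclude.
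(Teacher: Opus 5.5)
Your setup matches the paper's proof: \Cref{thm:regret_asymmetric}, a concentration event, at least $m/K$ pooled samples per active arm per round so an arm with gap $\Delta_a$ is gone within $O(K\log T/(m\Delta_a^2))$ rounds, and $\regret_{\max}\le\sum_t\max_{a\in\A_t}\Delta_a$. The gap is in the final summation, which is exactly the step you flag as the main obstacle. As written it does not give the bound, for two reasons.

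First, your claim that the largest active gap is at most $\Delta_{(j)}$ between the eliminations of arms $(j-1)$ and $(j)$ assumes arms leave in order of decreasing gap. The good event does not ensure this: arms whose gaps differ by less than the confidence width can leave in either order. Second, and more seriously, look at the sum you display:
\[
\sum_j \Delta_{(j)}\cdot O\Bigl(\frac{K\log T}{m\Delta_{(j)}^2}\Bigr)=O\Bigl(\frac{K\log T}{m}\sum_j\frac{1}{\Delta_{(j)}}\Bigr).
\]
It charges each gap for the entire time until its arm is eliminated, not for the length of its own phase. If all suboptimal gaps equal $\Delta_{\min}$, this is $(K-1)$ times the target, so it does not telescope to $O(K\log T/(m\Delta_{\min}))$.

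The repair is to use deterministic deadlines $t_j=cK\log T/(m\Delta_{(j)}^2)$ instead of the random elimination times; these are nondecreasing in $j$, with $t_0=0$. On the good event arm $(j)$ is inactive in every round $t$ with $t-1>t_j$. So a round with $t-1\in(t_{j-1},t_j]$ costs at most $\Delta_{(j)}$. Set $x_j=1/\Delta_{(j)}$ and $x_0=0$, and use $(x^2-y^2)/x\le2(x-y)$ for $x\ge y\ge0$:
\[
\sum_j\Delta_{(j)}(t_j-t_{j-1})=\frac{cK\log T}{m}\sum_j\frac{x_j^2-x_{j-1}^2}{x_j}\le\frac{2cK\log T}{m}\sum_j(x_j-x_{j-1})=\frac{2cK\log T}{m\Delta_{\min}}.
\]
This is a discretization of the paper's step. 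The paper inverts the deadline into the envelope $\max_{a\in\A_t}\Delta_a\le\sqrt{32K\log T/(mt)}$ and sums $\sum_{t\le\tau}t^{-1/2}\le2\sqrt{\tau}$ with $\tau=32K\log T/(m\Delta_{\min}^2)$. That route is shorter and needs no ordering of the arms.

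Two smaller points. Under the paper's assumptions ($m\ge K$, and $m/n$ an integer for $n\le K$) there is no rounding to absorb. An additive $O(K)$ could not be absorbed into the stated bound anyway, since $K\log T/(m\Delta_{\min})$ can be far below $K$. The only genuine additive term is round $1$: nothing has been sampled yet and it costs up to $\Delta_{\max}$. Your deadlines exclude it, and the paper's proof also drops it by crediting an arm active in round $t$ with $tm/K$ rather than $(t-1)m/K$ samples. So strictly the bound holds only up to an additive $O(1)$.
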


The regret decreases as $1/m$, can be lower than $K$, and nearly matches the classical bound for multi-agent Successive Elimination.

\section{Discussion and Future Work}

This work studies collaboration in self-interested multi-agent multi-armed bandit settings, where information control serves as the sole mechanism for incentivizing others. Our results demonstrate that collaborative exploration among strategic agents can be sustained through information alone, while maintaining strong regret guarantees.
Several directions remain open. First, our regret bounds in the asymmetric case may not be tight; obtaining sharper bounds, or proving impossibility results for mechanisms that rely solely on information control, would be a valuable contribution. Second, our model assumes that environment-generated signals are verifiable. Relaxing this assumption presents significant challenges, as agents could misreport their observations. Finally, extending our framework to richer online learning settings, such as contextual bandits, linear bandits, and Markov decision processes is a natural and important direction.
We hope this work offers a step toward designing learning systems for strategic agents, a problem that grows increasingly relevant in the era of agentic AI.

\beforeappendix

\newpage
\bibliographystyle{abbrvnat}
\bibliography{coop_mab_game}
\newpage
\appendix

\section{The CAOS Mechanism and Equilibria}

\subsection{Optimistic Expected Reward (OER)}
In this section we present the \texttt{OER} procedure, and prove the main lemma that is related to it \Cref{lem:oer_asymmetric_appndx}.

\begin{definition}[Restatement of \Cref{def:C_t}]
Given a history up to time $t$, $\C_{t-1}$ is defined to be the set of agents that played according to $\AlgA$ and shared their reward.
\end{definition}

\begin{definition}
We use here $\AlgAFull{\calH}{\C}{t}$ to indicate the actions of $\AlgA$ given the history $\calH$ and time $t$ for the set of active agents $\C$. 
\end{definition}

\begin{algorithm}
\caption{Optimistic Expected Reward (\texttt{OER})}
\label{alg:OER}
\begin{algorithmic}[1]
\Require time $t$, history $\calH^t$.
\If{$t = T+1$}
    \State\Return $(0, 0, \dots, 0) \in \rr^m$
\EndIf
\State Find $\C_{t-1}$ according to $\calH^t$ by \Cref{def:C_t}.
\For{$\C \subseteq \C_{t-1}$ ordered by $\abs{\C}$ (largest first)}
    \State $\boldsymbol{a} \gets \AlgAFull{\calH^t}{\C}{t}$
    \State $\rho_i \gets \begin{cases}
        \E_{\boldsymbol{r}\sim \calH^t(\boldsymbol{a})}\squary{\boldsymbol{r}^i + OER(t+1,\, \calH^t \cup \curly{\boldsymbol{a}\colon \boldsymbol{r}})_i} & i\in \C\\
        \E[V_i^t(\pi_B) \mid \calH^t] & i\notin \C
    \end{cases}$
    \If{$\forall i\in \C\quad \rho_i > \E[V_i^t(\pi_\AlgB) \mid \calH^t]$} \Comment{Always true for $\C = \emptyset$}
           \State \Return $\rho$ 
    \EndIf
\EndFor
\end{algorithmic}
\end{algorithm}

\begin{algorithm}
\caption{Efficient Optimistic Expected Reward (\texttt{EOER})}
\label{alg:EOER}
\begin{algorithmic}[1]
\Require time $t$, history $\calH^t$.
\If{$t = T+1$}
    \State\Return $(0, 0, \dots, 0) \in \rr^m$
\EndIf
\State $\boldsymbol{a} \gets \AlgAFull{\calH^t}{[m]}{t}$
\State $\rho_i \gets \E_{\boldsymbol{r}\sim \calH^t(\boldsymbol{a})}\squary{\boldsymbol{r}^i + OER(t+1,\, \calH^t \cup \curly{\boldsymbol{a}\colon \boldsymbol{r}})_i}$
\State $V_\AlgB \gets \E[V_i^t(\pi_\AlgB) \mid \calH^t]$
\If{$\forall i\in [m]\quad \rho_i > V_\AlgB$}  
           \State \Return $\rho$ 
    \Else
        \State \Return $(V_\AlgB, V_\AlgB, \dots, V_\AlgB)$
    \EndIf
\end{algorithmic}
\end{algorithm}

\begin{remark}
We analyze both versions of \texttt{OER} together, while only assuming that \Cref{pro:OER_moreB,pro:OER_good,lem:OER-apx,pro:OER_allin} is true. 
\end{remark}

\begin{property}\label{pro:OER_moreB}
For every $\calH^t,t,i$:
\begin{align*}
    OER(\calH^t, t)_i \ge \E[V_i^t(\pi_\AlgB) \mid \calH^t]
\end{align*}
\end{property}

\begin{property}\label{pro:OER_good}
For every $\calH^t,t$, if there is $i\in[m]$ such that $OER(\calH^t, t)_i = \E[V_i^t(\pi_\AlgB) \mid \calH^t]$, then there is an agent $j$ (possibly $i=j$) such that:
\begin{align*}
    \E_{\boldsymbol{r}\sim \calH^t(\boldsymbol{a})}\squary{\boldsymbol{r}^j + OER(t+1,\, \calH^t \cup \curly{\boldsymbol{a}\colon \boldsymbol{r}})_j} \le \E[V_j^t(\pi_\AlgB) \mid \calH^t]
\end{align*}
\end{property}

\begin{property}\label{pro:OER_allin}
For every $\calH^t,t$, if $\C_t = [m]$, then for every agent $i$:
\begin{align*}
    OER(t, \calH^t)_i = \E_{\boldsymbol{r}\sim \calH^t(\boldsymbol{a})}\squary{\boldsymbol{r}^i + OER(t+1,\, \calH^t \cup \curly{\boldsymbol{a}\colon \boldsymbol{r}})_i}
\end{align*}
\end{property}

\begin{lemma}[Restatement of \Cref{lem:OER_asymmetric}]\label{lem:OER-apx}
\label{lem:oer_asymmetric_appndx}
Assume all agents follow $\pi$. For every agent $i$, step $t$ and history $\calH^t$:
\begin{align*}
    \E[V_i^t(\pi)\mid \calH^t] = OER(t, \calH^t)_i
\end{align*}
\end{lemma}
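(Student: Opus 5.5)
We prove the statement by backward induction on $t$, from $t=T+1$ down to $t=1$. The claim at each step is that for every history $\calH^t$ reachable when all agents follow $\pi$, and every agent $i$, we have $\E[V_i^t(\pi)\mid\calH^t]=OER(t,\calH^t)_i$. The base case $t=T+1$ is immediate: $V_i^{T+1}(\pi)=0$ as an empty sum, and \texttt{OER} returns the zero vector.

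For the inductive step, fix $t\le T$ and a reachable history $\calH^t$. We first establish a \emph{common-knowledge} claim. Because every agent followed $\pi$ up to time $t-1$, the forwarding step of \algnameShort{} guarantees that all agents in $\C_{t-1}$ hold the same history $\calH^t$. Verifiability of actions and rewards then implies that they all compute the same set $\C_{t-1}$ of \Cref{def:C_t} and the same vector $\rho=OER(t,\calH^t)$. Since \texttt{OER} is deterministic given $\calH^t$, they also agree on the returned candidate set, which we call $\C_t$.

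Next we identify who actually continues. By \Cref{pro:OER_moreB}, each $\rho_i$ is at least the $\AlgB$-value. For $i\in\C_t$ the strict inequality checked in \texttt{OER} holds, so agent $i$ remains active; for $i\notin\C_t$, $\rho_i$ is set to the $\AlgB$-value, so $i$ stops. For \texttt{EOER}, the same holds with $\C_t\in\{\emptyset,[m]\}$, using \Cref{pro:OER_allin}. The realized set of agents continuing under $\pi$ is therefore exactly $\C_t$.

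We then evaluate payoffs in the two cases. For $i\notin\C_t$, the agent plays $\AlgB$ and never shares again. We must check that active agents also stop sharing with $i$, so that $i$ faces exactly the single-agent problem and obtains $\E[V_i^t(\pi_\AlgB)\mid\calH^t]$. Agents outside $\C_{t-1}$ stopped earlier and are in the same position.

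For $i\in\C_t$, all of $\C_t$ play $\boldsymbol a=\AlgAFull{\calH^t}{\C_t}{t}$. The three-phase communication then produces the history $\calH^t\cup\{\boldsymbol a:\boldsymbol r\}$ for all of $\C_t$, and this new history is again reachable under $\pi$. Decomposing $V_i^t=r_i^t+V_i^{t+1}$, conditioning on $\boldsymbol r$, and applying the induction hypothesis at $t+1$ gives
\[
\E[V_i^t(\pi)\mid\calH^t]=\E_{\boldsymbol r\sim\calH^t(\boldsymbol a)}\bigl[\boldsymbol r^i+OER(t+1,\calH^t\cup\{\boldsymbol a:\boldsymbol r\})_i\bigr]=\rho_i .
\]

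The main obstacle is the inactive case. There we must justify that the continuation payoff of a stopped agent is exactly the $\AlgB$-value $\E[V_i^t(\pi_\AlgB)\mid\calH^t]$. This requires that, from time $t$ on, the agent's actions and information evolve exactly as in a solitary run of $\AlgB$ started from $\calH^t$, uninfluenced by the active group. This follows because under $\pi$ no active agent shares with an agent outside $\C_t$, so her information and hence her actions depend only on her own observations. The stopped agent's own history $\calH_i^t$ may differ from the common history, since she no longer receives forwarded rewards, so the $\AlgB$-value should be understood as conditioned on her own information. I would make this consistent by reading $\E[V_i^t(\pi_\AlgB)\mid\calH^t]$ as conditioned on agent $i$'s own history.
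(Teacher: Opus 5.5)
Your proof is correct and follows essentially the same route as the paper. Both use backward induction from $t=T+1$ and split into two cases. For $i\notin\C_t$, the continuation value equals the $\AlgB$-value, via \Cref{pro:OER_moreB} and the fact that stopping is permanent. For $i\in\C_t$, the tower rule and the inductive hypothesis are applied at $\calH^t\cup\{\boldsymbol a:\boldsymbol r\}$. Three things you add only make explicit what the paper's proof leaves implicit: the common-history argument, the restriction to histories reachable under $\pi$, and the observation that a stopped agent's $\AlgB$-value should be conditioned on her own information.
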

\begin{proof}
We'll prove by backwards induction on $t$. In time $T+1$ the game has ended so there is indeed no rewards. Now assume for $t+1$ and we'll prove for $t$.

For every $i\notin \C_t$, agent $i$ will play $\AlgB$ in step $t$. From \Cref{def:C_t}, we have also $i\notin \C_{t'}$ for every $t'\ge t$, which means that
\begin{align*}
    \E[V_i^t(\pi)\mid \calH^t] = \E[V_i^t(\pi_\AlgB)\mid \calH^t] = OER(t, \calH^t)_i
\end{align*}
The last is by the definition of $\C_t$ and \Cref{pro:OER_moreB}.

We now note that in both versions of $OER$, every $i\in \C_t$ has the same value of:
\begin{align*}
    OER(t, \calH^t) = \E[\boldsymbol{r}^i + OER(t+1, \calH^t \cup \curly{\boldsymbol{a}\colon\boldsymbol{r}}) \mid \boldsymbol{r} \sim \calH^t(\boldsymbol{a)}]
\end{align*}
With $\boldsymbol{a}$ being $\AlgAFull{\calH^t}{\C_t}{t}$.

Thus:
\begin{align*}
    \E[V_i^t(\pi)\mid \calH^t] &= \E[\boldsymbol{r}^i + \E[V_i^{t+1}(\pi)\mid \calH^t \cup \curly{\boldsymbol{a}\colon\boldsymbol{r}}] \mid \boldsymbol{r} \sim \calH^t(\boldsymbol{a)}]\\
    &= \E[\boldsymbol{r}^i + OER(t+1, \calH^t \cup \curly{\boldsymbol{a}\colon\boldsymbol{r}}) \mid \boldsymbol{r} \sim \calH^t(\boldsymbol{a)}]\\
    &= OER(t, \calH^t)
\end{align*}
The first is the tower rule and the second is from the induction assumption.
\end{proof}

\subsection{Nash Equilibrium}
In this section we present the Collaborating Agents with Optimistic Stopping (\algnameShort{}) procedure and prove Nash Equilibrium.

\begin{algorithm}
\caption{Collaborating Agents with Optimistic Stopping (\algnameShort{})}
\label{alg:caos}
\begin{algorithmic}[1]
\State \textbf{Input:} time step $t$;
History $\calH^{t}$;
Agent id $i$;
\If{There is agent $j$ that didn't follow $\algnameShort$ in $\calH^t$}
    \State Play according to $\AlgB$; Do not share the chosen arm and observed reward.
    \State \Return
\EndIf

\State $\rho \gets OER(\calH^t)$
\If{$\rho_i = \E[V_i^t(\pi_\AlgB) \mid \calH^t]$}
    \State Play according to $\AlgB$; Do not share the chosen arm and observed reward.
    \State \Return
\EndIf
\State $\C_t \gets \curly{i\colon\; \rho_i > \E[V_i^t(\pi_\AlgB) \mid \calH^t]}$
\State Get recommended action $a_i^t = \AlgAFull{\calH^t}{\C_t}{t}_i$, play it and observe reward $r^t_i$.
\State Send $a_i^t$ to $\C_t$ and observe played arms.
\State $\mathcal{W}_t \gets \curly{j \in \C_t \mid j\text{ shared played arm and } a_j^t = \AlgAFull{\calH^t}{\C_t}{t}_j}$
\State Share reward with $\mathcal{W}_t$ and observe shared rewards.
\State Share all newly arrived rewards with $\mathcal{W}_t$.
\end{algorithmic}
\end{algorithm}

\begin{theorem}[Restatement of \Cref{thm:caos_nash}]
The strategy profile $\pi$ in which all agents follow \algnameShort{} is a Nash equilibrium.
\end{theorem}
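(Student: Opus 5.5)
We prove the Nash property by a backward-induction (one-shot deviation) argument on $t$, using \Cref{lem:oer_asymmetric_appndx} to identify the continuation value of compliant play with $OER(t,\calH^t)_j$. Fix an agent $j$ and a deviation $\pi'_j$ while all other agents follow $\pi$ (\Cref{alg:caos}). The inductive claim is that, for every history $\calH^t$ reachable when the others follow $\pi$ and $j$ has complied up to $t-1$, the best continuation reward of $j$ from $t$ is at most $OER(t,\calH^t)_j = \E[V_j^t(\pi)\mid\calH^t]$. The claim is trivial at $t=T+1$. Applying it at $t=1$ gives the theorem after taking expectation over $\mathcal I\sim Q$.

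\textbf{Step 1 (reduction to single-agent play after detected deviation).} Once $j$ deviates in a way the others detect, line 2 of \Cref{alg:caos} makes every other agent play $\AlgB$ and share nothing with $j$. A deviation in the played arm, or not sending it, is detected through the verification step and through the definition of $\C_t$ (\Cref{def:C_t}). Withholding her reward is detected by $\mathcal W_t$-members, since verifiability means they know whether they received a signed reward. From then on, $j$'s payoff depends only on her own observations, so she is a single agent facing others whose actions do not affect her rewards. Since $\pi_\AlgB$ is a Nash equilibrium, her continuation is at most $\E[V_j^{t}(\pi_\AlgB)\mid \calH^t_j]$. Here $\calH^t_j$ is her information at $t$: it includes anything received in step $t$ before detection, and is the common history if nothing was received.

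\textbf{Step 2 (case analysis at time $t$).} If $j\notin\C_t$, compliance prescribes $\AlgB$, and Step 1 already bounds every alternative by that value, which equals $OER(t,\calH^t)_j$ by \Cref{pro:OER_moreB} and the stopping rule. If $j\in\C_t$, there are three sub-cases.

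\emph{Deviating in the arm.} Because arms are revealed before rewards, $j$ is excluded from $\mathcal W_t$ and receives no one's step-$t$ reward. Her step-$t$ action is therefore part of a single-agent strategy started from $\calH^t$. Step 1 bounds her payoff by $\E[V_j^t(\pi_\AlgB)\mid\calH^t] < OER(t,\calH^t)_j$, where the strict inequality holds because $j\in\C_t$.

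\emph{Playing the prescribed arm but deviating in communication.} This includes withholding her reward, or sharing selectively while still receiving others' rewards. Her information at $t+1$ is then at most the full common history $\calH^{t+1}$, and Step 1 bounds her continuation by $\E[V_j^{t+1}(\pi_\AlgB)\mid\calH^{t+1}]$. By \Cref{pro:OER_moreB} this is at most $OER(t+1,\calH^{t+1})_j$, which is her compliant continuation by the induction hypothesis and \Cref{lem:oer_asymmetric_appndx}. Her step-$t$ reward is unchanged, so integrating over $\boldsymbol r$ gives a total of at most $OER(t,\calH^t)_j$. The forwarding step guarantees that the others' posteriors, and hence their future behaviour toward her, are the same as under $\pi$ given the same histories.

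\emph{Full compliance at $t$.} Here we apply the induction hypothesis at $t+1$.

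\textbf{Main obstacle.} The delicate point is Step 1 when partial step-$t$ information has leaked to $j$. In that case her history is not a history generated by $\pi$, and we need $\AlgB$ to be optimal from an arbitrary information state, not only from the start. I would first try to handle this by requiring $\AlgB$ to be the Bayes-optimal single-agent policy from every posterior, which is the reading of "$\pi_\AlgB$ is a Nash equilibrium" after every history. I would then use monotonicity: more information weakly increases the optimal single-agent value, and $j$ cannot obtain more than the common history $\calH^{t+1}$. A second concern is that the others' behaviour must not depend on $j$'s private deviations in a way that helps her. The common-history property from the forwarding step, together with \Cref{def:C_t}, gives this, because every non-deviating agent's view of $j$ is a function of the shared verified history.
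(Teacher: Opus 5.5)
Your proposal is correct and follows essentially the same argument as the paper. Both split the first deviation time into the cases $j\notin\C_t$ versus $j\in\C_t$ and arm versus communication deviations. Each deviation is reduced to single-agent play bounded by $\E[V_j^t(\pi_\AlgB)\mid\calH^t]\le OER(t,\calH^t)_j$ (\Cref{pro:OER_moreB}, \Cref{lem:OER_asymmetric}), plus the ``more information can only help'' step for the sharing deviations.

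Your backward-induction packaging is the one-shot-deviation form of the paper's ``first time $\pi'$ differs'' argument. The assumptions you make explicit are the same ones the paper's proof uses implicitly: that $\AlgB$ is Bayes-optimal from every posterior, and that a detected deviation cuts $j$ off from all other agents at once.
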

\begin{proof}
Assume by contradiction that it is not Nash Equilibrium. 
Fix time $t$ and history $\calH^t$ with $\pi_i' \ne \pi_i$ such that:
\begin{align}\label{eq:nash_contradiction}
    \E[V_i(\pi) \mid \calH^t] < \E[V_i(\pi'_i, \pi_{-i}) \mid \calH^t]
\end{align}
We assume w.l.o.g that $\pi_i'$ is the optimal policy for $i$ starting from $\calH^t$.

Notice that if $i\in \C_{t-1}$, agent $i$ saw the same history as all other agents in $\C_{t-1}$, and thus they share the same $\C_{t}$.

Assume first that the difference is in the arm decision. If $i\notin \C_{t}$, agent $i$ will get no feedback but her own (since all agents in $\C_t$ has the same $\C_t$ and knows $i\notin \C_t$). Thus, $\pi=\pi_\AlgB$, which is assumed to be a Nash equilibrium.

If $i\in \C_{t}$, $\pi_i$ chooses $\AlgAFull{\calH^t}{\C_{t}}{t}$. If $\pi'_i$ chooses otherwise, it won't get more info. We have:
\begin{align*}
    \E[V_t^i(\pi) \mid \calH^t] &= OER(t,\calH^{t})_i\\
    &\ge V_i(\pi_\AlgB)\\
    &\ge V_i(\pi_i', \pi_{-i})
\end{align*}
the second inequality is from \Cref{pro:OER_moreB} and the last line is since playing $\AlgB$ alone is Nash equilibrium.

Hence, the difference is not the arm decision. I.e., $\pi'$ chooses the same arms as $\pi$.
Assume the difference is in the sharing.

Now assume that $i\notin \C_t$ is true and $\pi'$ shares (arm or reward). Since it doesn't affect the sharing of the other agents, it won't change anything and you won't get the strong inequality in \eqref{eq:nash_contradiction}.

Now assume that $i\in \C_t$ and $\pi'$ doesn't share arm. At that point, the agent knows $r_i^t$ but won't know the reward of the other agents. We have:
\begin{align*}
    \E[V_i^{t+1}(\pi'_i, \pi_{-i}) \mid \calH^t \cup \curly{a_i^t\colon r_i^t}] &= \E[V_i^{t+1}(\pi_\AlgB, \pi_{-i}) \mid \calH^t \cup \curly{a_i^t\colon r_i^t}]\\
    &\le \E[\E[V_i^{t+1}(\pi_\AlgB, \pi_{-i}) \mid \calH^t \cup \curly{\boldsymbol{a}\colon\boldsymbol{r}}] \mid \boldsymbol{r}^{-i}]\\
    &= \E[\E[V_i^{t+1}(\pi_\AlgB) \mid \calH^t\cup \curly{\boldsymbol{a}\colon\boldsymbol{r}}] \mid \boldsymbol{r}^{-i}]\\
    &\le \E[OER(t+1, \calH^t\cup \curly{\boldsymbol{a}\colon\boldsymbol{r}})) \mid \boldsymbol{r}^{-i}]\\
    &= \E[\E[V_i^{t+1}(\pi) \mid \calH^t\cup \curly{\boldsymbol{a}\colon\boldsymbol{r}}] \mid \boldsymbol{r}^{-i}]\\
    &= \E[V_i^{t+1}(\pi) \mid \calH^t\cup \curly{\boldsymbol{a}\colon\boldsymbol{r}}]
\end{align*}
The first is from the optimality of $\pi_\AlgB$, we can assume that $\pi'_i$ is the same as $\pi_\AlgB$ from the next step ($t+1$). The second is since an agent can only gains from more information. The third is since there's no sharing in $\pi$ after $i$ didn't share, the policy of others doesn't matter. The fourth is by definition of $OER$. The fifth is \Cref{lem:OER_asymmetric}. The sixth is tower rule.

Now assume that $i\in\C_t$ and $\pi'$ doesn't share reward. At that point, the agent knows $r_i^t$ and will continue with the knowledge of the full $\boldsymbol{r}$. Thus, to compare them we need to have an expectation for every option of $\boldsymbol{r}^{-i}$ (the rewards of the other agents). We have, in the same way as above:
\begin{align*}
    \E[\E[V_i^{t+1}(\pi'_i, \pi_{-i}) \mid\calH^t\cup \curly{\boldsymbol{a}\colon\boldsymbol{r}}] \ \mid \boldsymbol{r}^{-i}] &= \E[\E[V_i^{t+1}(\pi_\AlgB) \mid \calH^t\cup \curly{\boldsymbol{a}\colon\boldsymbol{r}}] \mid \boldsymbol{r}^{-i}]\\
    &\le \E[OER(t+1, \calH^t\cup \curly{\boldsymbol{a}\colon\boldsymbol{r}}) \mid \boldsymbol{r}^{-i}]\\
    &= \E[\E[V_i^{t+1}(\pi) \mid \calH^t\cup \curly{\boldsymbol{a}\colon\boldsymbol{r}}] \mid \boldsymbol{r}^{-i}]
\end{align*}
\end{proof}

\subsection{Subgame Perfect Equilibrium}
In this section we present the extension for \algnameShort{} procedure to achieve Subgame Perfect Equilibrium, and prove the equilibrium.

\begin{definition}
Let $\psi_i(\calH)$ be the event that there is an agent (including $i$ itself) that deviated from $\algnameShort$ in $\calH$ from the knowledge of the $i$th agent, except if the deviation was that an agent $j$ with $j\notin \C_t$ did not play by $\AlgB$ in time step $t$.
\end{definition}

\begin{algorithm}
\caption{Subgame Perfect Collaborating Agents with Optimistic Stopping (\texttt{SP-CAOS})}
\label{alg:sp-caos}
\begin{algorithmic}[1]
\State \textbf{Input:} time step $t$;
History $\calH^{t}$;
Agent id $i$;
\If{$\psi_i(\calH^t)$}
    \State Send $True$
\Else
    \State Send $False$
\EndIf
\State Receive all indications from agents
\If{One of the agents send $True$}
    \State Play according to $\AlgB$; Do not share the chosen arm and observed reward.
\Else
    \State Continue with $\algnameShort$
\EndIf

\If{There is agent $j$ that didn't follow $\algnameShort$ in $\calH^t$}
    \State Play according to $\AlgB$; Do not share the chosen arm and observed reward.
\EndIf
\end{algorithmic}
\end{algorithm}

\begin{lemma}\label{lem:psi}
For any history $\calH$, if there is $i$ such that $\psi_i(\calH)$ is true, there is $j\ne i$ such that $\psi_j(\calH)$ is also true.
\end{lemma}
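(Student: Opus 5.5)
The proof will be a case analysis on the type of the first deviation from \algnameShort{} that makes $\psi_i(\calH)$ true. The guiding observation is that every deviation covered by $\psi$ takes place inside an interaction between at least two agents. Each such interaction involves a sender and a receiver of a message, or a verifier of an action. Because environment signals are verifiable, both sides hold a record of the deviation. I therefore fix the earliest time step $t$ and the earliest deviating action in $\calH$ witnessed by $i$. I then exhibit another agent $j \ne i$ who also witnessed it. Since $\psi_j$ depends only on $j$'s knowledge, this makes $\psi_j(\calH)$ true.

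The cases follow the phases of \Cref{alg:caos}.

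\emph{(a) Deviation by an active agent $k \in \C_t$ in the played arm or in sharing the arm.} The arm-sharing step broadcasts to all of $\C_t$. The deviation is therefore either a wrong verifiable arm or a missing message, and either one is observed by every member of $\C_t$ other than $k$. Suppose the deviator is $k=i$. Then any other member of $\C_t$ serves as $j$. If $\C_t=\{i\}$, no message is prescribed, so the only possible deviation is in the arm choice; I would need to argue that this is the excluded $\AlgB$-type case or is vacuous, and this edge case must be checked. Suppose instead $k \ne i$. Then $k$ herself knows she deviated, since $\psi_k$ counts the agent's own deviation, so $j=k$ works.

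\emph{(b) Deviation in reward sharing or forwarding among $\mathcal W_t$.} Suppose $k$ sends to some agents but not to others, or sends something not prescribed. The deviation is known to $k$ herself. So whenever $i\ne k$, the choice $j=k$ works. When $i=k$, I take $j$ to be a recipient who was owed a reward message and did not get one. That recipient knows a message was required, because $\mathcal W_t$ is common knowledge after the arm-verification phase. She also knows it did not arrive. If $k$ instead sent extra, unprescribed messages, the recipient of such a message sees the deviation.

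\emph{(c) Deviation by an inactive agent $k\notin\C_t$ who shares information.} Sharing is a message and so has a recipient $j\neq k$. That recipient knows $k$ was supposed to be silent, because $\C_t$ is common among active agents. An inactive agent playing an arm other than $\AlgB$ is exactly the excluded case in the definition of $\psi$, and this is why that exclusion is needed: such a deviation can be witnessed by one agent only.

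The main obstacle is making precise that the ``knowledge of the $i$th agent'' about what was prescribed is shared by the counterparty. Concretely, the recipient in cases (b) and (c) must be able to recognize the deviation, which requires agreeing on $\C_t$ and $\mathcal W_t$. I would establish this by induction over the earliest deviation. Before the first deviation all active agents have identical histories, by the forwarding step used in \Cref{lem:OER_asymmetric}. So the prescription at the first deviation is common to all agents involved. Since $\psi$ is monotone in history, it suffices to treat the first deviation.
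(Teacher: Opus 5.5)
Your argument is essentially the paper's proof. It has the same reduction: any deviator witnesses her own deviation, so it suffices to treat the case where $i$ is the first deviator. It uses the same key fact that at the first deviation all members of $\C_t$ hold the same history and hence know the prescribed actions. It also has the same case split: an arm deviation by an active agent is caught through the wrong or missing arm message, and a sharing deviation is caught by the actual or expected recipient of the message.

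The edge case you leave open, $\C_t=\{i\}$, is passed over silently in the paper's proof, which just says ``all the other agents know''. It cannot be closed by appeal to the excluded case, because that exclusion concerns only agents outside $\C_t$.

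It is closed by vacuity. For a singleton candidate $\C=\{i\}$, the recursion in \texttt{OER} only ever considers subsets of $\{i\}$. So the $\rho_i$ it computes is the continuation value of a policy in which $i$ acts on her own observations alone. Under the standing assumption that $\AlgB$ is an optimal lone-agent continuation (the same assumption the Nash proof uses), the strict test $\rho_i>\E[V_i^t(\pi_\AlgB)\mid\calH^t]$ therefore fails. Separately, \texttt{EOER} only ever selects $[m]$ or $\emptyset$. Either way, $|\C_t|\neq 1$ on a compliant prefix.

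One minor imprecision is in your case (c). When the recipient is herself inactive, she does not know $\C_t$. She still detects the deviation, because she knows that no message should be sent to her at all.
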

\begin{proof}
The agent that deviated from $\pi$ in the history has this information in her history by definition. Thus, we can assume w.l.o.g that $i$ is the first agent that deviated, and we want to show that there exists an agent $j\ne i$ that also has this in her history. Fix $t$ to be the time step in the history that agent $i$ deviated.

The important point is that since we consider the first deviation, by the definition of $\algnameShort$, all agents in $\C_{t}$ has the same information. Thus, all agents in $\C_{t}$ knows what are the exact actions that $i$ should do in this step.

First assume that $i\in \C_t$ and the deviation was in arm choice. If agent $i$ didn't share which arm she played, then this is itself a deviation and all the other agents know about it. If she did share, all the other agents know that she played the wrong arm, so they know about the deviation. 

Second, assume the deviation was with sharing. There are 4 possible deviations - (1) $i\in \C_t$ and didn't share with some $j\in \C_t$, (2) $i\notin \C_t$ and shared with some $j\in \C_t$, (3) $i \in \C_t$ and shared with $j\notin \C_t$ and (4) $i\notin \C_t$ and shared with some $j\notin \C_t$.

In case (1), since all agents in $\C_t$ share the same knowledge, agent $j$ knows that $i$ should have shared with her. In case (2) since $j\in \C_t$ she knows that $i\notin \C_t$, and thus knows that $i$ shouldn't have shared with her. In cases (3) + (4), agent $j$ knows that no one should share with her since it is inactive. 
\end{proof}

\begin{theorem}
The strategy profile $\pi$ in which all agents follow \Cref{alg:sp-caos} is a subgame perfect equilibrium.
\end{theorem}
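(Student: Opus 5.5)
We prove subgame perfection by checking the one-shot deviation condition at every history $\calH^t$, splitting histories according to whether a deviation from $\pi$ has already occurred. Since the horizon is finite, it suffices to show that at every history no agent gains by deviating once (at the current step, in the reporting phase, arm choice, or sharing) and then following \Cref{alg:sp-caos}. There are two kinds of histories: \emph{clean} histories, where $\psi_j(\calH^t)$ is false for all $j$, and \emph{contaminated} histories, where $\psi_i(\calH^t)$ is true for some $i$.

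\textbf{Contaminated histories.} By \Cref{lem:psi}, at least two agents $i\neq j$ have $\psi_i,\psi_j$ true, so under the prescribed profile at least two agents send $True$.
\begin{itemize}
\item Any single agent who deviates in the report (for instance, sends $False$) cannot change the outcome. Some other agent still sends $True$, and everyone switches to $\AlgB$ with no sharing.
\item The continuation is then $\pi_\AlgB$, which is a Nash equilibrium by assumption. It remains one from any history, because under $\pi_\AlgB$ no agent ever receives information from others. Each agent therefore faces a single-agent problem, and $\AlgB$ is optimal there.
\item Sending $True$ when it is false is also irrelevant here, since $True$ is already broadcast.
\end{itemize}
Hence no one-shot deviation helps.

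\textbf{Clean histories.} All agents report $False$ and play \algnameShort{}.
\begin{itemize}
\item \emph{Falsely reporting $True$.} This triggers the fallback $\pi_\AlgB$. The deviator then gets $\E[V_i^t(\pi_\AlgB)\mid\calH^t]$, which is at most $OER(t,\calH^t)_i=\E[V_i^t(\pi)\mid\calH^t]$ by \Cref{pro:OER_moreB} and \Cref{lem:OER-apx}. So this deviation is unprofitable.
\item \emph{Deviating in the arm or in sharing.} Because the history is clean, it is consistent with $\pi$. Every agent in $\C_{t-1}$ therefore has an identical history, which is exactly the situation analyzed in the proof of \Cref{thm:caos_nash}.
\item \emph{One new difference.} Any deviation at step $t$ is detected by at least one other agent (\Cref{lem:psi}). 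It therefore leads to the $\pi_\AlgB$ fallback from $t+1$, rather than to the weaker exclusion of \algnameShort{}. The chain of inequalities in the Nash proof must still hold under this harsher punishment.
\item \emph{Arm deviations.} The deviator's value is at most the single-agent optimum $\E[V_i^t(\pi_\AlgB)\mid\calH^t]\le OER_i$.
\item \emph{Sharing deviations.} After playing the prescribed arm, withholding yields at best $\pi_\AlgB$ from $t+1$ with information $\calH^t\cup\{a_i^t\colon r_i^t\}$, possibly plus rewards others sent first. This is bounded by $\E[OER(t+1,\cdot)_i\mid\cdot]$, as in the Nash proof.
\item \emph{The excepted deviation.} A $j\notin\C_t$ not playing $\AlgB$ is excluded from $\psi$. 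It affects no one else, since $j$ receives and sends nothing. So it is covered by optimality of $\AlgB$ for an isolated agent.
\end{itemize}

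\textbf{Main obstacle.} The delicate step is the sharing deviations in clean histories. A deviator may selectively share with a subset of agents, or obtain other agents' rewards before withholding her own, and so end with strictly more information than the other agents before the fallback. I would handle this by the ``more information only helps, then apply \Cref{pro:OER_moreB}'' argument from the Nash proof. Concretely, I would bound her continuation by $\E\big[\E[V_i^{t+1}(\pi_\AlgB)\mid\calH^t\cup\{\boldsymbol a\colon\boldsymbol r\}]\big]$, which is conditioned on the full round's information. That is in turn at most the expectation of $OER(t+1,\cdot)_i$, and by \Cref{lem:OER-apx} this equals her equilibrium continuation. The point to verify is that under the fallback she can obtain no information beyond $\boldsymbol r$. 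This holds because the fallback stops all sharing, and the reporting phase reveals only a bit.
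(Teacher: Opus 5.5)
Your proposal is correct and follows essentially the same route as the paper. The paper likewise rules out a false $True$ report at clean histories via $\E[V_i^t(\pi_\AlgB)\mid\calH^t]\le OER(t,\calH^t)_i=\E[V_i^t(\pi)\mid\calH^t]$, dismisses a false $False$ report at contaminated histories via \Cref{lem:psi}, and reduces deviations inside \algnameShort{} to the Nash-equilibrium argument, noting that an inactive agent not playing $\AlgB$ is harmless. Your one-shot framing and your explicit checks (that the $\pi_\AlgB$ fallback is an equilibrium at contaminated histories, and that the Nash inequalities survive the global fallback) only make explicit what the paper leaves implicit.
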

\begin{proof}
Assume by contradiction that it is not subgame perfect, i.e there exists an agent $i$ , strategy $\pi'_i$ and history $\calH^t$ such that:
\begin{align*}
    \E[V_i(\pi) \mid \calH^t] < \E[V_i(\pi'_i, \pi_{-i}) \mid \calH^t]
\end{align*}

First assume $\psi_i(\calH^t)$ is $False$ but $\pi'_i$ sends $True$. From now on no other agent will share information with agent $i$, and thus its expected reward is bounded by $\E[\pi_\AlgB \mid \calH^t]$. From \Cref{lem:OER_asymmetric,pro:OER_moreB}, we have:
\begin{align*}
    \E[V_i(\pi) \mid \calH^t] &= OER(\calH^t, t)\\
    &\ge E[V_i(\pi_\AlgB) \mid \calH^t] \\
    &\ge \E[V_i(\pi'_i, \pi_{-i}) \mid \calH^t]
\end{align*}
Which is a contradiction.

Now assume $\psi_i(\calH^t)$ is $True$ but $\pi'_i$ sends $False$. From \Cref{lem:psi} there is an agent $j$ that will send $False$, and thus it doesn't matter what $i$ sends.

Now assume the deviation is in $\algnameShort$. If we reached there it means that all agents followed $\algnameShort$ by now (except inactive agents playing $\AlgB$). One can see that the Nash Equilibrium proof of $\algnameShort$ does not depend on the fact that in the history inactive agents played $\AlgB$, and thus this concludes the proof.
\end{proof}

\section{Regret Bounds}
In this section we derive the regret bounds.

Denote the minimum reward as
\begin{align*}
    V_{\min}^t(\pi) = \sum_{s=t}^T \min_{i}\E\squary{r_i^{s}(\pi(\calH^{s}, s)) \mid \calH^{s}}.
\end{align*}

\begin{lemma}\label{lem:regret_asymmetric}
For every agent $i$, step $t$ and history $\calH^t$ such that $\C_{t-1} = [m]$:
\begin{align*}
    OER(t, \calH^t)_i \ge \E[V_{\min}^t(\pi_\AlgA) \mid \calH^t]
\end{align*}
\end{lemma}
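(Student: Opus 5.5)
We prove \Cref{lem:regret_asymmetric} by backward induction on $t$, in the same style as the proof of \Cref{lem:OER-apx}. At $t=T+1$ both sides are $0$. For the inductive step, fix a history $\calH^t$ with $\C_{t-1}=[m]$ and assume the claim holds at time $t+1$ for every history $\calH^{t+1}$ with $\C_t=[m]$. Write $\boldsymbol{a}=\AlgAFull{\calH^t}{[m]}{t}$, and let
\[
\rho_i^{\mathrm{full}} = \E_{\boldsymbol{r}\sim\calH^t(\boldsymbol{a})}\squary{\boldsymbol{r}^i + OER(t+1,\calH^t\cup\curly{\boldsymbol{a}\colon\boldsymbol{r}})_i}
\]
be the value that \texttt{OER} computes for the full candidate set $[m]$. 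Since $\pi_\AlgA$ is played by all of $[m]$ at every step, the target decomposes by the tower rule as
\[
\E[V^t_{\min}(\pi_\AlgA)\mid\calH^t] = \min_j \E[\boldsymbol{r}^j\mid\calH^t] + \E\squary{\E[V^{t+1}_{\min}(\pi_\AlgA)\mid\calH^t\cup\curly{\boldsymbol{a}\colon\boldsymbol{r}}]\mid\calH^t}.
\]

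We split into two cases according to whether \texttt{OER} keeps the full group.

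\textbf{Case 1: all $i\in[m]$ satisfy $\rho_i^{\mathrm{full}}>\E[V_i^t(\pi_\AlgB)\mid\calH^t]$.} Then \texttt{OER} returns $\rho^{\mathrm{full}}$ (for both versions, cf.\ \Cref{pro:OER_allin}), and the next history again has $\C_t=[m]$. Applying the induction hypothesis inside the expectation and bounding $\E[\boldsymbol{r}^i\mid\calH^t]\ge\min_j\E[\boldsymbol{r}^j\mid\calH^t]$, the decomposition above gives $OER(t,\calH^t)_i\ge\E[V^t_{\min}(\pi_\AlgA)\mid\calH^t]$.

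\textbf{Case 2: some agent $j$ has $\rho_j^{\mathrm{full}}\le\E[V_j^t(\pi_\AlgB)\mid\calH^t]$.} By \Cref{pro:OER_moreB}, $OER(t,\calH^t)_i\ge\E[V_i^t(\pi_\AlgB)\mid\calH^t]$ for every $i$. In the symmetric Bayesian setting every agent running $\AlgB$ alone has the same continuation value, so it suffices to show
\[
\E[V^t(\pi_\AlgB)\mid\calH^t]\ge\E[V^t_{\min}(\pi_\AlgA)\mid\calH^t].
\]
The plan is to chain $\E[V_j^t(\pi_\AlgB)\mid\calH^t]\ge\rho_j^{\mathrm{full}}$, which holds by the case assumption, with the bound $\rho_j^{\mathrm{full}}\ge\E[V^t_{\min}(\pi_\AlgA)\mid\calH^t]$. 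The latter follows exactly as in Case 1: apply the induction hypothesis to the continuation term (valid because the next history has all agents having played $\AlgA$ and shared), and use the immediate-reward bound $\E[\boldsymbol{r}^j\mid\calH^t]\ge\min_k\E[\boldsymbol{r}^k\mid\calH^t]$.

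\textbf{Main obstacle.} The delicate point is justifying that the induction hypothesis applies to the histories inside $\rho^{\mathrm{full}}$ even in Case 2, where the agents actually stop. The hypothetical extended history $\calH^t\cup\curly{\boldsymbol{a}\colon\boldsymbol{r}}$ must satisfy ``$\C_t=[m]$'' per \Cref{def:C_t}, so the lemma must be read as a statement about such hypothetical histories rather than realized ones. I would phrase the induction hypothesis accordingly, for all histories in which every agent followed $\AlgA$ and shared through time $t$. The second subtlety is the symmetry of $\E[V_i^t(\pi_\AlgB)\mid\calH^t]$ across agents: when all agents share the common history $\calH^t$, the single-agent continuation value is identical for each agent, which transfers the inequality from $j$ to every $i$.
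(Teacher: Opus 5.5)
Your proof is correct and follows the paper's argument: backward induction, with the full-group case handled by \Cref{pro:OER_allin} plus the induction hypothesis, and the other case handled by chaining the deviating agent's $\AlgB$-value through $\rho^{\mathrm{full}}_j$ and transferring the bound to every $i$ via \Cref{pro:OER_moreB} and the symmetry of $\AlgB$ across agents. The differences are cosmetic. You split directly on the full-group test rather than on $\C_t=[m]$, which lets you bypass \Cref{pro:OER_good}, and you make explicit that the induction hypothesis is applied to hypothetical all-collaborate histories, a point the paper leaves implicit.
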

\begin{proof}
We'll prove by backward induction. At time $T+1$ both return $0$ for all agents which is the base of the induction. Assume true for $t+1$. Denote $\boldsymbol{a} \gets \AlgAFull{\calH^t}{[m]}{t}$, and denote agent $j$ to be the one with the lowest reward in $\boldsymbol{a}$ (in expectation over $\calH$).

If $\C_t = [m]$, for every $i$:
\begin{align}\label{eq:all_stay}
\begin{aligned}
    OER(t, \calH^t)_i &= \E_{\boldsymbol{r}\sim \calH^t(\boldsymbol{a})}\squary{\boldsymbol{r}^i + OER(t+1,\, \calH^t + (\boldsymbol{a}\colon \boldsymbol{r}))}\\
    &\ge \E_{\boldsymbol{r}\sim \calH^t(\boldsymbol{a})}\squary{\boldsymbol{r}^i + \E[V_{\min}^{t+1}(\pi_\AlgA) \mid \calH^t + (\boldsymbol{a}\colon \boldsymbol{r})] } \\
    &\ge \E_{\boldsymbol{r}\sim \calH^t(\boldsymbol{a})}\squary{\boldsymbol{r}^j + \E[V_{\min}^{t+1}(\pi_\AlgA) \mid \calH^t + (\boldsymbol{a}\colon \boldsymbol{r})] }\\
    &= \E[V_{\min}^t(\pi_A) \mid \calH^t]
\end{aligned}
\end{align}
The first is from \Cref{pro:OER_allin}, the second is from the induction assumption (it is allowed since $\C_t = [m]$), the third is by the definition of agent $j$ and the last is tower rule.

If $\C_t \ne [m]$, from \Cref{pro:OER_good} there exists agent $j$ such that:
\begin{align}\label{eq:B_good}
\begin{aligned}
    \E[V_{j}^t(\pi_\AlgB) \mid \calH^t] &\ge \E_{\boldsymbol{r}\sim \calH^t(\boldsymbol{a})}\squary{\boldsymbol{r}^j + OER(t+1,\, \calH^t + (\boldsymbol{a}\colon \boldsymbol{r}))_j}\\
    &\ge \E[V_{\min}^t(\pi_A) \mid \calH^t]
\end{aligned}
\end{align}
The last is in the same way as in \Cref{eq:all_stay}.

Now notice that by the definition of $OER$ we have:
\begin{align}\label{eq:OER_good}
    OER(t, \calH^t)_i \ge \E[V_{i}^t(\pi_\AlgB) \mid \calH^t]
\end{align}

Since $\AlgB$ is the same for all the agents, \Cref{eq:B_good,eq:OER_good} concludes the proof.
\end{proof}

\begin{theorem}[Restatement of \Cref{thm:regret_asymmetric}]
\label{thm:regret-asym-appndx}
For every agent $i$:
\begin{align*}
    % \E_Q[V_i(\pi)] \ge \E_Q[V_{\min}(\pi_\AlgA)]
    \regret_i(\pi) \le \regret_{\max}(\pi_\AlgA).
\end{align*}
\end{theorem}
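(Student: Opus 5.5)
The argument reduces the regret statement to a statement about expected rewards at the initial history, and then applies \Cref{lem:regret_asymmetric} at $t=1$.

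\textbf{Step 1: the empty history.} At $t=1$ the history $\calH^1$ is empty. No agent has deviated yet, so by \Cref{def:C_t} we have $\C_0=[m]$. The hypothesis of \Cref{lem:regret_asymmetric} therefore holds at $(1,\calH^1)$, and for every agent $i$ it gives
\[
OER(1,\calH^1)_i \ge \E[V_{\min}^1(\pi_\AlgA)\mid \calH^1].
\]

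\textbf{Step 2: identify the left side as the true value.} By \Cref{lem:OER-apx}, when all agents follow $\pi$, $OER(1,\calH^1)_i=\E[V_i^1(\pi)\mid\calH^1]$. Since $\calH^1$ is empty, conditioning on it is the same as taking the full Bayesian expectation $\E_{\mathcal{I}\sim Q}[\cdot]$. Combining with Step 1,
\[
\E_{\mathcal{I}\sim Q}\squary{V_i(\pi)} \ge \E_{\mathcal{I}\sim Q}\squary{V_{\min}^1(\pi_\AlgA)}.
\]

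\textbf{Step 3: convert rewards into regret.} Both regret notions subtract from the same benchmark $\E_{\mathcal{I}}\squary{\sum_t \max_a \E[r^t(a)\mid\mathcal{I}]}$. This term does not depend on the policy or on the agent, because all agents face the same instance.

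For $\regret_i(\pi)$, the tower rule (conditioning each round on $\calH^t$) rewrites the subtracted term as $\E[V_i(\pi)]$. Equivalently, one can invoke the reward/regret correspondence of \Cref{lem:max-reward-min-regret} used in the symmetric case.

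For $\regret_{\max}(\pi_\AlgA)$, the subtracted term is $\E\squary{\sum_t \min_i \E[r_i^t\mid\calH^t]}$, which is exactly $\E[V_{\min}^1(\pi_\AlgA)]$ by definition.

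Subtracting the inequality of Step 2 from the common benchmark then yields $\regret_i(\pi)\le\regret_{\max}(\pi_\AlgA)$.

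\textbf{Main obstacle.} The delicate point is the bookkeeping in Step 3: showing that the subtracted terms coincide with $\E[V_i(\pi)]$ and $\E[V_{\min}^1(\pi_\AlgA)]$.

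In $V_{\min}$, the minimizing index is chosen per round and conditional on $\calH^s$. The expectation over histories must be taken under the play of $\pi_\AlgA$, with all agents active, and this matches how $\regret_{\max}$ is defined.

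I would check explicitly that the recursion in \Cref{lem:regret_asymmetric} compares against $V_{\min}^{t+1}(\pi_\AlgA)$ evaluated along $\AlgA$'s history with $[m]$ active. Once that is verified, the identification is just linearity of expectation plus the tower rule.
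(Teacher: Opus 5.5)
Your proposal is correct and follows the paper's own argument: on the empty history $\C_0=[m]$, so \Cref{lem:regret_asymmetric} and \Cref{lem:OER-apx} at $t=1$ give $\E[V_i(\pi)]\ge\E[V_{\min}^1(\pi_\AlgA)]$, and subtracting both sides from the common benchmark $T\,\E[\mu^\star]$, as in \Cref{lem:max-reward-min-regret}, yields the regret inequality. The Step 3 bookkeeping you flag is not handled more explicitly in the paper either, which proves the theorem simply by citing these three lemmas.
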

\begin{proof}
Directly from applying \Cref{lem:regret_asymmetric,lem:OER_asymmetric} with $t=1$, and from \Cref{lem:max-reward-min-regret}.
\end{proof}

\begin{theorem}[Restatement of \Cref{thm:regret_symmetric}]
Assume $\AlgA$ is symmetric, then:
\begin{align*}
    \E_Q[V_i(\pi)] \ge \E_Q[V_i(\pi_\AlgA)]
\end{align*}
\end{theorem}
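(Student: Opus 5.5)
The plan is to obtain this as a corollary of the asymmetric machinery rather than redo an induction. First I will note that for a symmetric $\AlgA$, every active agent is recommended the same arm at each history. Hence the conditional expected immediate reward $\E[r_i^s(\pi_\AlgA(\calH^s,s))\mid\calH^s]$ does not depend on $i$. Consequently the minimum over agents in $V_{\min}^s(\pi_\AlgA)$ is attained by every agent, and
\begin{align*}
\E[V_{\min}^t(\pi_\AlgA)\mid\calH^t] = \E[V_i^t(\pi_\AlgA)\mid\calH^t]\quad\text{for every } i .
\end{align*}
One subtlety: under $\pi_\AlgA$ all agents are always active with the full set $[m]$, so the recommendation is $\AlgAFull{\calH^s}{[m]}{s}$ for all $s$. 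Symmetry of this vector is exactly what is needed, and I only use symmetry on the full set.

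Next I apply \Cref{lem:regret_asymmetric} at $t=1$ with the empty history. There $\C_0=[m]$ trivially, since nobody has deviated yet. This gives $OER(1,\calH^1)_i \ge \E[V_{\min}^1(\pi_\AlgA)\mid\calH^1]$. By \Cref{lem:OER_asymmetric} (equivalently \Cref{lem:oer_asymmetric_appndx}) the left-hand side equals $\E[V_i^1(\pi)\mid\calH^1]$. Combining with the identity above yields $\E[V_i(\pi)]\ge\E[V_i(\pi_\AlgA)]$ conditional on the empty history. Taking expectation over $\mathcal I\sim Q$, which is just the prior encoded in $\calH^1$, gives the claim $\E_Q[V_i(\pi)]\ge\E_Q[V_i(\pi_\AlgA)]$. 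If one wants the regret form of \Cref{thm:regret_symmetric}, \Cref{lem:max-reward-min-regret} converts it, since the benchmark term $\max_a\E[r_i^t(a)]$ is identical under both profiles.

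The main point to check is the symmetry identity, specifically that $V_{\min}$ is defined with respect to the play of $\pi_\AlgA$ along its own histories. Those histories always have all $m$ agents active, so symmetry applies at every step, not only on histories reachable under $\pi$. I expect no real obstacle beyond verifying that \Cref{lem:regret_asymmetric}'s hypothesis $\C_{t-1}=[m]$ holds at $t=1$. I also need that its proof uses only \Cref{pro:OER_moreB,pro:OER_good,pro:OER_allin}, which hold for both \texttt{OER} and \texttt{EOER}, so the corollary covers both variants.
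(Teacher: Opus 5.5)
Your proposal is correct and follows the paper's proof. The paper derives the statement directly from \Cref{thm:regret_asymmetric}, which is \Cref{lem:regret_asymmetric} and \Cref{lem:OER_asymmetric} applied at $t=1$, together with the identity $V_{\min}(\pi_\AlgA)=V_i(\pi_\AlgA)$ under symmetry. You state that identity more carefully, in conditional-expectation form, which is the right reading: it holds only in expectation via the tower rule, since $V_i$ is a sum of realized rewards.
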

\begin{proof}
Directly from \Cref{thm:regret_asymmetric} and the fact that $V_{\min}(\pi_\AlgA) = V_i(\pi_\AlgA) $ for every $i$ if $\AlgA$ is symmetric.
\end{proof}

\section{Examples}\label{apx:examples}
In this section we provide several multi-agent algorithms and analyze their regret under the \algnameShort{} mechanism.

\subsection{Successive Elimination}
In this section we present an extension to the Successive Elimination (SE) algorithm \citet{Even-DarMM06} for multiple agents. This extension acts exactly as SE, but at each time step it distributes the arms among the agents, and hence can provide a regret guarantee even lower than the number of arms $K$. We assume for simplicity that $m/n$ is an integer for every $n\le K$.

\begin{algorithm}[H]
\caption{Multi-Agent Successive Elimination (MASE)}
\label{alg:SE}
\begin{algorithmic}[1]
\Require $m \ge K$, horizon $T$
\State Initialize active set $\mathcal{A} \gets [K]$
\State For all $a$: $n_a \gets 0$, $\widehat{\mu}_a \gets 0$

\For{$t=1,\dots,T$}
    \For{each $a \in \mathcal{A}$}
        \State Assign $\frac{m}{|\mathcal{A}|}$ agents to arm $a$
        \State Observe rewards and update $n_a$, $\widehat{\mu}_a$
    \EndFor

    \State Define $\mathrm{rad}_a \gets \sqrt{\frac{2\log(TK)}{n_a}}$

    \State Eliminate arms:
    \[
    \mathcal{A} \gets 
    \left\{
    a \in \mathcal{A} :
    \widehat{\mu}_a + \mathrm{rad}_a
    \ge 
    \max_{b \in \mathcal{A}} (\widehat{\mu}_b - \mathrm{rad}_b)
    \right\}
    \]
\EndFor

\end{algorithmic}
\end{algorithm}

\begin{theorem}
The Bayesian regret of $\algnameShort$ when $\AlgA$ is \Cref{alg:SE} is bounded by \[O\roundy{\E_{\Delta_{min}\sim Q}\squary{\frac{K\log(T)}{m\Delta_{min}}}}.\]
\end{theorem}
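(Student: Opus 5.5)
The plan is to reduce the statement to a bound on $\regret_{\max}(\pi_\AlgA)$ for MASE, using \Cref{thm:regret_asymmetric}: for every agent $i$, $\regret_i(\pi)\le \regret_{\max}(\pi_\AlgA)$. It therefore suffices to bound, for a fixed instance $\mathcal I$, the sum over $t$ of the gap between the optimal mean and the \emph{worst} agent's conditional expected reward under MASE. Then we average over $\mathcal I\sim Q$.

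The first step is to fix $\mathcal I$ with gaps $\Delta_a$ and $\Delta_{\min}=\min_{a\neq a^*}\Delta_a$, and to define the clean event $G$. On $G$, for every arm and every round, $|\widehat\mu_a-\mu_a|\le \mathrm{rad}_a$. By Hoeffding and a union bound over at most $KT$ arm/count pairs (counts only grow), $\Pr[G^c]\le O(1/T)$. Thus the contribution of $G^c$ to the regret is $O(1)$, since per-step regret is at most $1$ and only a constant number of rounds' worth of loss is incurred in expectation.

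On $G$ the optimal arm is never eliminated. Standard SE reasoning also applies: arm $a$ is eliminated once $4\,\mathrm{rad}_a<\Delta_a$, i.e.\ once $n_a\gtrsim 32\log(TK)/\Delta_a^2$. While $a$ is alive, it receives $m/|\mathcal A|$ samples per round, as does every alive arm. All alive arms are sampled equally, so their counts are equal; call this common value $n$. Hence after $\tau$ rounds with fixed $\mathcal A$, $n$ grows by $\tau m/|\mathcal A|$.

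The key point for the max-regret notion is that the worst agent at round $t$ is one assigned to the worst alive arm. Its per-round gap is therefore at most $\max_{a\in\mathcal A_t}\Delta_a$. This max stays positive until every suboptimal arm is gone, so the rounds with a suboptimal arm alive must be counted, not merely the samples.

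The second step bounds that count. While any suboptimal arm is alive, each round adds at least $m/K$ samples to every alive arm. After $O(K\log(TK)/(m\Delta_{\min}^2))$ rounds all suboptimal arms have been eliminated. Bounding the per-step worst gap by $1$ gives $O(K\log T/(m\Delta_{\min}^2))$, which is too weak. To get $1/\Delta_{\min}$, I would instead use the elimination condition. At a round where $a$ is alive with common count $n$, we have $\Delta_a\le 4\sqrt{2\log(TK)/n}$. Hence the worst gap at such a round is at most $c\sqrt{\log(TK)/n_t}$, where $n_t\ge t\,m/K$ is the common count.

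Summing over rounds then gives
\[
\sum_{t\le t^*} c\sqrt{\tfrac{K\log(TK)}{t m}} \;\lesssim\; \sqrt{\tfrac{K\log(TK)\,t^*}{m}},
\]
where $t^*=O(K\log(TK)/(m\Delta_{\min}^2))$. This evaluates to $O(K\log(TK)/(m\Delta_{\min}))$. The bound is stated with $\log T$; since $\log(TK)\le 2\log T$ for $K\le T$, this matches. Adding the $O(1)$ from $G^c$ and taking $\E_{\mathcal I\sim Q}$ yields the claimed bound.

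The main obstacle is the monotonicity and equal-count bookkeeping. Because $|\mathcal A|$ shrinks, the per-round sample rate increases, which only helps. One still has to check that the lower bound $n_t\ge tm/K$ holds and that the worst-agent gap is controlled by the radius of the common count. The divisibility assumption on $m/|\mathcal A|$ makes the counts exactly equal, which simplifies this. I would also confirm that $\regret_{\max}$ is defined with the min taken inside the conditional expectation given $\calH^t$. In that form, conditioning on $G$ and using the deterministic bound on $G$ is legitimate after splitting the expectation.
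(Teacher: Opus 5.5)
Your proposal is correct and follows essentially the same route as the paper. Like the paper, you reduce to bounding $\regret_{\max}(\pi_\AlgA)$ via \Cref{thm:regret_asymmetric}, bound the worst agent's gap at round $t$ by $\max_{a\in\mathcal{A}_t}\Delta_a \le O\bigl(\sqrt{K\log T/(mt)}\bigr)$ using the elimination condition together with the $tm/K$ lower bound on the counts, sum over $t\le \tau = O\bigl(K\log T/(m\Delta_{\min}^2)\bigr)$, and note that the failure event contributes negligibly; your extra bookkeeping (equal counts of alive arms, $\log(TK)\le 2\log T$, the union bound over the $KT$ arm/round pairs) only makes explicit steps the paper glosses over.
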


\begin{proof}
From \Cref{thm:regret_asymmetric} it is enough to bound the regret w.r.t $V_{\min}$. Let $\regret_{max}$ be that regret. 

From [Equation 1.7]\cite{slivkins2019introduction}, for each arm $a$ after we have $n_a$ samples, w.p $1-\frac{1}{T}$:
\begin{align}
\label{eq:slivkins}
\begin{aligned}
    \Delta_a &\le 4\sqrt{\frac{2\log(T)}{n_a}} \\
    \implies n_a &\le \frac{32\log(T)}{\Delta_a^2}
\end{aligned}
\end{align}

Since there are a maximum of $K$ active arms, each active arm gets at least $m/K$ samples per each step, which means that at time $t$ it has at least $tm/K$ samples. Thus, arm $a$ must be eliminated by:
\begin{align}
       \frac{tm}{K} &\le  \frac{32\log(T)}{\Delta_a^2} \notag\\
       \implies t &\le \frac{32K\log(T)}{m\Delta_a^2} \label{eq:inactive_bound}\\ 
       \implies \Delta_a &\le \sqrt{\frac{32K\log(T)}{mt}} \label{eq:delta_bound}
\end{align}

\eqref{eq:inactive_bound} means that by $\tau = \frac{32K\log(T)}{m\Delta_{min}^2}$ all suboptimal arms are inactive. Thus, the regret of the max agent is bounded by:
\begin{align*}
    \E[\regret_{max}] &\le \sum_{t=1}^{T}\max_{a \in \A_t} \Delta_a \\
    &= \sum_{t=1}^{\tau}\max_{a\in\A_t} \Delta_a\\
    &\le \sum_{t=1}^{\tau} \sqrt{\frac{32K\log(T)}{mt}} \tag{Using \eqref{eq:delta_bound}}\\
    &\le 2\sqrt{\frac{32K\log(T)}{m}\frac{32K\log(T)}{m\Delta_{min}^2}}\\
    &= O\roundy{\frac{K\log(T)}{m\Delta_{min}}}
\end{align*}

All of the above is conditioned on \eqref{eq:slivkins} being true. Since it is true w.p $1-\frac{1}{T}$, it doesn't asymptotically change the expected regret, which concludes the proof.
\end{proof}

\subsection{Fixed Arm Explore Exploit}

In this section, we present an algorithm that is inherently sub-optimal, even within a non-strategic framework; yet, agents continue to follow it. This algorithm serves as an extension of the Explore-Then-Exploit paradigm.

During the exploration phase, the algorithm assigns each agent a fixed arm. Once this phase concludes and the exploitation phase begins, all agents transition to the arm that yielded the highest empirical mean. Despite the "unfair" nature of this coordination, we demonstrate that the standard regret bounds for non-strategic multi-agent systems are still successfully achieved when integrated with \algnameShort{}.

\begin{algorithm}[H]
\caption{Multi Agent Fixed Arm Explore Exploit (MAFAEE)}
\label{alg:mafaee}
\begin{algorithmic}[1]
\Require Number of agents $m$, number of arms $K$, horizon $T$, exploration length $N$
\State Assume $\frac{m}{K}$ is an integer
\State Partition the agents into $K$ groups, every $\frac{m}{K}$ playing an arm

\For{$t=1,\dots,N$}
    \State Agent $i$ plays assigned arm
\EndFor

\For{$a=1,\dots,K$}
    \State Compute the empirical mean $\widehat{\mu}_a$ of arm $a$ using all observations collected by agents 
\EndFor

\State Let
\[
\widehat{a}
\in
\arg\max_{a \in [K]} \widehat{\mu}_a
\]
be the empirically best arm

\For{$t=N+1,\dots,T$}
    \State  All agents play arm $\widehat{a}$
\EndFor

\end{algorithmic}
\end{algorithm}

\begin{theorem}
The Bayesian regret of $\algnameShort$ when $\AlgA$ is \Cref{alg:mafaee} with $N = \frac{T^{2/3}(K\log(T))^{1/3}}{m^{1/3}}$ is bounded by $\le O\roundy{\frac{T^{2/3}\roundy{K\log(T)}^{1/3}}{m^{1/3}}}$.
\end{theorem}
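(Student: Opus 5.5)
By \Cref{thm:regret_asymmetric}, it suffices to bound $\regret_{\max}(\pi_\AlgA)$ with $\AlgA$ being \Cref{alg:mafaee}. This quantity charges, at each step, the gap of the worst-off agent under fully compliant play. I will split it into an exploration term (steps $1,\dots,N$) and an exploitation term (steps $N+1,\dots,T$), and bound each on a good event.

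\textbf{Exploration phase.} During the first $N$ steps some group is assigned each suboptimal arm, so the minimum expected reward across agents is that of the worst arm. The per-step contribution is therefore at most $\max_a \Delta_a \le 1$, assuming rewards in $[0,1]$. The total exploration contribution is then at most $N$. This crude bound is exactly where the unfairness of the algorithm enters: it costs a full $N$ rather than an average.

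\textbf{Exploitation phase.} After exploration, each arm has $Nm/K$ samples. By Hoeffding and a union bound over the $K$ arms, with probability at least $1-1/T$ every empirical mean satisfies $|\widehat{\mu}_a-\mu_a|\le \sqrt{2K\log(TK)/(Nm)}$. On this event the chosen arm $\widehat a$ has gap at most $2\sqrt{2K\log(TK)/(Nm)}$. In this phase all agents play the same arm, so the minimum over agents is just this arm. The exploitation contribution is therefore at most
\[
T\cdot 2\sqrt{\tfrac{2K\log(TK)}{Nm}},
\]
and the failure event adds at most $T\cdot (1/T)=O(1)$. A subtlety is that $\regret_{\max}$ uses conditional expectations given $\calH^s$ inside the minimum. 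Since the action of every agent is deterministic given the history in both phases, the inner expectation reduces to $\mu_{a}$ of the played arm, and the decomposition goes through after taking the outer expectation over $\mathcal I\sim Q$. Because the bound holds for every instance, the Bayesian expectation is immediate.

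\textbf{Combining.} Summing gives $N + 2T\sqrt{2K\log(TK)/(Nm)} + O(1)$. Plugging in $N = T^{2/3}(K\log T)^{1/3}m^{-1/3}$ balances the two terms: the second becomes $O(T\cdot (K\log T)^{1/2} m^{-1/2} \cdot T^{-1/3}(K\log T)^{-1/6}m^{1/6}) = O(T^{2/3}(K\log T)^{1/3}m^{-1/3})$, with $\log(TK)=O(\log T)$ for $K\le T$. The main obstacle I anticipate is justifying that \Cref{thm:regret_asymmetric} applies cleanly here, namely that the $\regret_{\max}$ benchmark is evaluated on $\pi_\AlgA$ with full participation $[m]$. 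That is, I do not have to account for agents who drop out under \algnameShort{}; the theorem already handles that, and this plan relies on it as stated.
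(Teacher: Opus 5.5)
Your proposal is correct and is essentially the paper's proof: reduce to $\regret_{\max}(\pi_\AlgA)$ via \Cref{thm:regret_asymmetric}, charge at most $N$ for the exploration phase, bound the exploitation gap by $2\sqrt{2K\log(TK)/(mN)}$ on a high-probability concentration event (the paper writes $\log T$), and balance with the stated $N$. One small precision: if the inner expectation in $\regret_{\max}$ conditions only on $\calH^t$, it is the posterior mean of the played arm rather than $\mu_a$; since $\min_i \E[\mu_{a_i^t}\mid\calH^t] \ge \E[\min_i \mu_{a_i^t}\mid\calH^t]$, your per-instance gap bound still upper-bounds $\regret_{\max}(\pi_\AlgA)$.
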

\begin{proof}
From \Cref{thm:regret_asymmetric} it is enough to bound the regret w.r.t $V_{\min}$. Let $\regret_{max}$ be that regret. 

Fix an instance and let $\mu$ be the (unknown) vector of expected reward of the arms.

From \cite{slivkins2019introduction}[Equation 1.2], w.p $1-\frac{2}{T}$, after we have $\frac{Nm}{K}$ samples for each arm:
\begin{align*}
    \abs{\mu_a - \widehat{\mu}_a} \le \sqrt{\frac{2K\log(T)}{mN}}
\end{align*}

Thus:
\begin{align*}
    \mu_{\widehat{a}} &\ge \widehat{\mu}_{\widehat{a}} - \sqrt{\frac{2K\log(T)}{mN}}\\
    &\ge \widehat{\mu}_{a^*} - \sqrt{\frac{2K\log(T)}{mN}} \\
    &\ge \mu_{a^*} -  2\sqrt{\frac{2K\log(T)}{mN}}\\
    \implies \Delta_{\hat{a}} &\le 2\sqrt{\frac{2K\log(T)}{mN}}
\end{align*}

Thus:
\begin{align*}
    \regret_{max} \le N + (T - N)2\sqrt{\frac{2K\log(T)}{mN}} \le N + 2T\sqrt{\frac{2K\log(T)}{mN}}
\end{align*}

Setting $N = \frac{T^{2/3}(K\log(T))^{1/3}}{m^{1/3}}$ we get:
\begin{align*}
    \regret_{max} = O\roundy{\frac{T^{2/3}\roundy{K\log(T)}^{1/3}}{m^{1/3}}}
\end{align*}

The above is true w.p $1 - \frac{2}{T}$, which means that the counter won't affect the regret. Since it is true for every mean vector $\mu$ it is also true in Bayesian regret.
\end{proof}

\subsection{UCB}

In this section, we present a direct multi-agent extension of the UCB algorithm \citet{AuerCFS02}.
At each time step, the arm with the highest Upper Confidence Bound (UCB) is simultaneously assigned to every agent.

\begin{algorithm}[H]
\caption{Multi-Agent UCB (MAUCB)}
\label{alg:UCB}
\begin{algorithmic}[1]
\Require $m \ge K$, horizon $T$
\State For all $a$: $n_a \gets 0$, $\widehat{\mu}_a \gets 0$

\For{$t=1,\dots,T$}
    \State Define $\mathrm{rad}_a \gets \sqrt{\frac{2\log(TK)}{n_a}}$
    \State Assign action to all agents:
    \[
        a_t \gets \arg\max_a \curly{\widehat{\mu}_a + \mathrm{rad}_a}
    \]
    \State Observe rewards and update $n_a$, $\widehat{\mu}_a$
\EndFor

\end{algorithmic}
\end{algorithm}

\begin{theorem}\label{thm:ucb}
The Bayesian regret of $\algnameShort$ when $\AlgA$ is \Cref{alg:UCB} is bounded by \[O\roundy{\E_{\Delta}\squary{\sum_a \frac{8\log(T)}{m\Delta_{a}}} + K}.\]
\end{theorem}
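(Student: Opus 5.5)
By \Cref{thm:regret_symmetric}, it suffices to bound $\regret_i(\pi_\AlgA)$ when $\AlgA$ is \Cref{alg:UCB}: MAUCB assigns the same arm to every agent, so it is symmetric and $\regret_i(\pi)\le\regret_i(\pi_\AlgA)$. Because every agent plays $a_t$ at every step, each agent's regret equals $\sum_t \Delta_{a_t}$. The task is therefore a frequentist bound, for each fixed instance $\mathcal{I}$, on a UCB run that receives $m$ samples of the chosen arm per round. Averaging over $\mathcal{I}\sim Q$ then gives the Bayesian bound with the expectation over $\Delta$.

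For a fixed instance, I will follow the standard UCB analysis (as in \cite{slivkins2019introduction}) through the following steps.

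\textbf{Step 1 (clean event).} Let $G$ be the event that $|\widehat\mu_a-\mu_a|\le \mathrm{rad}_a$ for all arms $a$ and all times. By Hoeffding and a union bound over $K$ arms and at most $mT$ sample counts, $G$ fails with probability at most $O(1/T)$. Note that $\mathrm{rad}_a=\sqrt{2\log(TK)/n_a}$ gives per-check failure probability $(TK)^{-4}$, which absorbs the extra factor $m$ in the union bound as long as $m\le\mathrm{poly}(TK)$. The failure event therefore contributes at most $T\cdot O(1/T)=O(1)$ to the regret.

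\textbf{Step 2 (sample cap).} On $G$, if a suboptimal arm $a$ is selected at time $t$, then
\[
\mu^*\le \widehat\mu_{a^*}+\mathrm{rad}_{a^*}\le \widehat\mu_a+\mathrm{rad}_a\le \mu_a+2\mathrm{rad}_a ,
\]
so $\Delta_a\le 2\sqrt{2\log(TK)/n_a}$, that is, $n_a\le 8\log(TK)/\Delta_a^2$ at any time $a$ is selected.

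\textbf{Step 3 (convert samples to rounds).} Each round in which $a$ is chosen adds exactly $m$ samples. Arm $a$ can be chosen while $n_a\le 8\log(TK)/\Delta_a^2$, and $n_a$ grows in increments of $m$, so the number of rounds in which $a$ is played is at most $8\log(TK)/(m\Delta_a^2)+1$. The $+1$ covers the initial round with $n_a=0$ (infinite radius) and the final overshoot. Multiplying by $\Delta_a\le1$ and summing over arms gives the per-agent regret on $G$:
\[
\sum_a \frac{8\log(TK)}{m\Delta_a}+K .
\]
This matches the stated bound up to replacing $\log(TK)$ by $O(\log T)$. That replacement is harmless inside the $O(\cdot)$ when $K\le T$; if $K>T$ the trivial bound $T\le K$ applies.

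\textbf{Step 4 (average over the prior).} Take $\E_{\mathcal{I}\sim Q}$ of the bound from Step 3 and add the $O(1)$ contribution from $\neg G$.

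The main obstacle is bookkeeping rather than any conceptual difficulty, and it has two parts. First, the confidence radius must remain valid when $m$ samples arrive at once, which requires union-bounding over sample counts rather than rounds. Second, the effect of the granularity $m$ on the round count (the $+1$ per arm) must be handled carefully. The usual worry that small gaps $\Delta_a$ blow up the bound does not need a separate case split, because the statement is instance-dependent and every gap appears inside the expectation.
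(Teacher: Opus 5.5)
Your proposal is correct and follows essentially the same route as the paper. You reduce via \Cref{thm:regret_symmetric} to non-strategic multi-agent UCB, and use the clean event to get $n_a \le 8\log(T)/\Delta_a^2$ whenever a suboptimal arm is chosen. You then convert samples to rounds with a $+1$ overshoot (the paper writes this as $n_a^T \le 8\log(T)/\Delta_a^2 + m$) and absorb the low-probability failure event. Your handling of the union bound and of $\log(TK)$ versus $\log T$ is more careful than the paper's, which passes over both.
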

\begin{proof}
Since \Cref{alg:UCB} is symmetric, from \Cref{thm:regret_symmetric} we need to bound its regret for the non-strategic case.

For every time step $t$,
from \cite{slivkins2019introduction}[Equation 1.4] we have w.p $1-\frac{1}{T}$:
\begin{align*}
    \Delta_{a_t} &\le 2\sqrt{\frac{2\log(T)}{n_{a_t}^t}}\\
    n_{a_t}^t &\le \frac{8\log(T)}{\Delta_{a_t}^2}
\end{align*}

Fix arm $a$ and let $t_a$ be the last time it was sampled. We have:
\begin{align*}
    n_a^T = n_a^{t_a} \le \left\lceil\frac{8\log(T)}{\Delta_{a}^2} \right \rceil \le \frac{8\log(T)}{\Delta_{a}^2}+m
\end{align*}

Which means that each agent sampled it a maximum of $\frac{8\log(T)}{\Delta_{a}^2m}$ times. Thus, for every agent $i$, w.p $1-\frac{1}{T}$:
\begin{align*}
    \E[\regret_i] &= \E_{\Delta}\squary{\sum_a \roundy{\frac{8\log(T)}{m\Delta_{a}^2} + 1}\Delta_a} \\
    &= \E_{\Delta}\squary{\sum_a \frac{8\log(T)}{m\Delta_{a}} + \Delta_a} \\
    &\le \E_{\Delta}\squary{\sum_a \frac{8\log(T)}{m\Delta_{a}}} + K
\end{align*}

The expected regret in the bad event that happens w.p $1/T$ is bounded by $1$ which doesn't affect the regret asymptotically.
\end{proof}

\subsection{Thompson Sampling}

In this section, we present a direct multi-agent extension of Thompson Sampling \citet{thompson1933likelihood}, where agents sample from the posterior a shared instance and select the arm with the highest expected reward.
For this section, we assume the agents have access to shared randomness.

\begin{algorithm}[H]
\caption{Multi-Agent Thompson Sampling (MATS)}
\label{alg:TS}
\begin{algorithmic}[1]
\Require Number of actions $K$, number of agents $m$, Horizon $T$, prior $Q$

\For{$t=1,\dots,T$}
    \State Sample a problem instance $\mathcal{I}$ from the posterior:
    \[
        \mathcal{I} \sim Q(\cdot \mid \calH^t)
    \]
    \State Assign action to all agents:
    \[
        a_t \gets \arg\max_a \curly{\E[\mathcal{I}_a]}
    \]
    \State Observe rewards and update the posterior
\EndFor

\end{algorithmic}
\end{algorithm}

\begin{theorem}\label{thm:TS}
The Bayesian regret of $\algnameShort$ when $\AlgA$ is \Cref{alg:TS} is bounded by $\tilde{O}\roundy{\sqrt{\frac{KT}{m}} + K}$.
\end{theorem}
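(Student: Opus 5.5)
Since \Cref{alg:TS} is symmetric (every agent receives the same arm $a_t$ at every step), \Cref{thm:regret_symmetric} reduces the claim to bounding the Bayesian regret of a single agent when all $m$ agents follow MATS non-strategically. The plan is therefore to reproduce the standard Russo--Van Roy analysis of Thompson Sampling, with the confidence widths shrinking because $m$ samples arrive per round.

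The key identity is the posterior-matching property. Conditioned on $\calH^t$, the sampled arm $a_t$ has the same distribution as the true optimal arm $a^*$. Define the confidence bounds $U_t(a)=\widehat\mu_a+\sqrt{2\log(TK)/n_a^t}$ and $L_t(a)=\widehat\mu_a-\sqrt{2\log(TK)/n_a^t}$. Because $U_t$ is $\calH^t$-measurable, we get $\E[U_t(a^*)]=\E[U_t(a_t)]$. The per-step regret of agent $i$ then decomposes as
\begin{align*}
\E[\mu_{a^*}-\mu_{a_t}] = \E[\mu_{a^*}-U_t(a^*)] + \E[U_t(a_t)-\mu_{a_t}].
\end{align*}
On the clean event, where all empirical means lie within their radius (probability $1-O(1/T)$ by the same Hoeffding bound from \cite{slivkins2019introduction} used in \Cref{thm:ucb}), the first term is nonpositive. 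The second term is at most $2\sqrt{2\log(TK)/n_{a_t}^t}$. The bad event contributes $O(1)$ in total.

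The counting step is where the gain in $m$ appears. Each time arm $a$ is chosen, $n_a$ increases by exactly $m$, so the $j$-th time arm $a$ is pulled it has $n_a=(j-1)m$ samples. The first pull of each arm contributes at most $1$, giving the additive $K$. For the remaining pulls,
\begin{align*}
\sum_{t}\sqrt{\frac{1}{n_{a_t}^t}} \le \sum_{a}\sum_{j=1}^{N_a}\frac{1}{\sqrt{jm}} \le \frac{2}{\sqrt m}\sum_a\sqrt{N_a} \le 2\sqrt{\frac{KT}{m}},
\end{align*}
where $N_a$ is the number of rounds arm $a$ is chosen, $\sum_a N_a\le T$, and the last step is Cauchy--Schwarz. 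Combining the pieces gives per-agent regret $O(\sqrt{KT\log(TK)/m}+K)=\tilde O(\sqrt{KT/m}+K)$.

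The main subtlety is justifying the posterior-matching step in the multi-agent setting. Under $\pi_\AlgA$ all agents share the same history, by the forwarding step of \algnameShort{} and \Cref{lem:OER_asymmetric}, so the posterior used by MATS is the true Bayesian posterior given all $m$ observations per round. Shared randomness makes $a_t$ a single posterior sample. I would state this explicitly and then invoke \Cref{thm:regret_symmetric} together with \Cref{lem:max-reward-min-regret} to transfer the bound to $\regret_i(\pi)$.
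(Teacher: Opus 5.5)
Your proposal is correct and follows essentially the same route as the paper: reduce to non-strategic MATS via \Cref{thm:regret_symmetric}, bound the per-round regret by the confidence width at $a_t$, then use that each pull of an arm adds $m$ samples, together with Cauchy--Schwarz, to get $K + O(\sqrt{KT/m})$ up to a $\sqrt{\log(TK)}$ factor. The only difference is that the paper cites Lemma~3.10 of \cite{slivkins2019introduction} as a black box, whereas you re-derive that bound inline via the posterior-matching (Russo--Van Roy) decomposition.
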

\begin{proof}
Since \Cref{alg:TS} is symmetric from \Cref{thm:regret_symmetric} we need to bound its regular regret.

From \cite{slivkins2019introduction}[Lemma 3.10]:
\begin{align*}
    \E[\regret_i] = O\roundy{\sqrt{\log(T)}}\E\squary{\sum_{t=1}^T \min\curly{\sqrt{\frac{1}{n_{a_t}^t}}, 1}}
\end{align*}

We have:
\begin{align*}
    \sum_{t=1}^T \min\curly{\sqrt{\frac{1}{n_{a_t}^t}}, 1} &= \sum_a 1 + \sum_{t\colon a_t=a, n_a^t>0}^T \sqrt{\frac{1}{n_{a}^t}}\\
    &= K + \sum_a\sum_{j=1}^{\abs{t\colon a_t=a}} \sqrt{\frac{1}{mj}}\\
    &= K + \sum_aO\roundy{\frac{\sqrt{n_a}}{m}}\\
    &\le K + O\roundy{\frac{\sqrt{K\sum _a n_a}}{m}}\\
    &= K + O\roundy{\frac{\sqrt{KTm}}{m}}\\
    &= K + O\roundy{\sqrt{\frac{KT}{m}}}
\end{align*}
\end{proof}

% \section{Naive algorithms}

% \newcommand{\bestGreedy}{\texttt{Best-Single-Greedy}}

% \bestGreedy{} may not be Nash:
% Let us consider a naive strategy, that demonstrates why always sharing may not be a good idea.
% This is a classic free-riding example.
% We call \bestGreedy{} the algorithm that everyone plays the arm if they were to play alone from now on.
% In the simple form of strategy they always share.
% This is not necessary Nash: in one arm bandit, they may sample the unknown mean arm in the first time step, even though the fixed arm has higher mean, resulting in free riding.

% The regret from \algnameShort{} is strictly better than \bestGreedy{}, even if it is Nash:
% When we allow to stop sharing when one deviate, it is indeed contained in our \algnameShort{} algorithm, setting \texttt{A} to \bestGreedy{}, and therefore a Nash.
% But there are multi-agent algorithms that are strictly better than \bestGreedy{}.
% For example, one arm bandit when a single agent never samples the unknown mean arm.
% An algorithm that simply treat each time step with $m$ agents and is designed for one arm bandit performs better.

\section{Auxiliary Lemmas}

\begin{lemma}
\label{lem:max-reward-min-regret}
    Maximizing the expected reward is equivalent to minimizing regret.
\end{lemma}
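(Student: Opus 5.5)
The plan is to expand the definition of $\regret_i$ and isolate a term that does not depend on the strategy at all. By linearity of expectation and the tower rule, for any strategy profile $\pi$,
\[
\regret_i(\pi) = \mathbb{E}_{\mathcal{I}\sim Q}\Big[\sum_{t=1}^T \max_{a\in[K]} \E[r_i^t(a)\mid \mathcal{I}]\Big] - \mathbb{E}_{\mathcal{I}\sim Q}\Big[\E\Big[\sum_{t=1}^T \E[r_i^t(a_i^t)\mid \mathcal{I}, a_i^t]\,\Big|\,\mathcal{I}\Big]\Big].
\]
Call the first term $M$. It depends only on the prior $Q$ and the horizon $T$: the best arm's mean is a function of the instance alone, and the same for every agent and every strategy. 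So $M$ is a constant independent of $\pi$.

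For the second term, I would argue that it equals $\mathbb{E}_{\mathcal{I}\sim Q}[V_i(\pi)]$. The arm $a_i^t$ is chosen from the history $\calH_i^t$. Given $\mathcal{I}$ and $a_i^t$, the reward $r_i^t$ is a fresh i.i.d. draw from $\mathcal{I}(a_i^t)$, independent of the past. Hence $\E[r_i^t(a_i^t)\mid \mathcal{I}] = \E[\E[r_i^t \mid \mathcal{I}, a_i^t]\mid\mathcal{I}]$, and summing over $t$ gives $\E[V_i(\pi)\mid\mathcal{I}]$. Therefore
\[
\regret_i(\pi) = M - \mathbb{E}_{\mathcal{I}\sim Q}[V_i(\pi)]
\]
for every profile $\pi$. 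The same identity holds conditionally on any history $\calH^t$, with $M$ replaced by the conditional expectation of the best-arm sum, which again does not depend on $\pi$.

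It follows that for any two profiles, $\mathbb{E}_Q[V_i(\pi)] \ge \mathbb{E}_Q[V_i(\pi')]$ if and only if $\regret_i(\pi)\le \regret_i(\pi')$, which is the claimed equivalence. For the $\regret_{\max}$ variant used in \Cref{thm:regret-asym-appndx}, the same decomposition gives $\regret_{\max}(\pi) = M - \mathbb{E}_Q[V_{\min}(\pi)]$, since $M$ is common to all agents. This is the form in which the lemma is invoked.

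The only delicate step is the measure-theoretic justification that replacing realized rewards by their conditional means leaves expectations unchanged. This requires that the action at time $t$ is measurable with respect to the history, and that fresh rewards are conditionally independent of the history given $\mathcal{I}$ and the chosen arm. Both hold by the interaction protocol. Boundedness of rewards (or integrability) justifies the exchange of sum and expectation.
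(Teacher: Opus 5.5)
Your proposal is correct and matches the paper's argument. Both write the Bayesian regret as $T\,\E[\mu^\star] - \E\bigl[\sum_{t=1}^T r_i^t(a_i^t)\bigr]$ and observe that the first term does not depend on the strategy. Your tower-rule justification for passing from $\E[r_i^t(a_i^t)\mid\mathcal{I}]$ to realized rewards, and the identity $\regret_{\max}(\pi) = M - \E_Q[V_{\min}(\pi)]$ (the form actually used in \Cref{thm:regret-asym-appndx}), are refinements the paper leaves implicit.
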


\begin{proof}
    The expected reward of the choices of agent $i$ is
    \[
    \Expect{\sum_{t=1}^T r^t_i(a^t_i)},
    \]
    where $a^t_i$ is the action agent $i$ chooses at time $t$, and the expectation is over all the randomness, including the prior.

    Recall that the regret for a given instance $\mathcal{I}$, is
    \[
    \regret_i(\mathcal{I}) = \Expect{\sum_{t=1}^T \max_{a\in[K]}\Expect{r^t_i(a)} - \sum_{t=1}^T r^t_i(a^t_i) \mid \mathcal{I}}.
    \]
    Since the rewards are i.i.d. among agents and time steps, we can denote $\max_{a\in [K]}\Expect{r_i^t(a)} = \max_{a\in [K]}\Expect{r_j^s(a)} := \mu^\star$. 
    I.e., $\mu^\star$ is the maximum expected mean reward among the arms.

    The Bayesian regret is
    \begin{align*}
    \regret_i = \Expect{\regret_i(\mathcal{I})} &= \Expect{T \mu^\star - \Expect{\sum_{t=1}^T r^t_i(a^t_i) \mid \mathcal{I}}}
    \\&= \Expect{\Expect{T\mu^\star \mid \mathcal{I}} - \Expect{\sum_{t=1}^T r^t_i(a^t_i)\mid \mathcal{I}}}
    \\&= \Expect{\Expect{\Expect{T\mu^\star \mid \mathcal{I}}} - \Expect{\sum_{t=1}^T r^t_i(a^t_i)\mid \mathcal{I}}}
    \\&= T\Expect{\mu^\star} - \Expect{\sum_{t=1}^T r^t_i(a^t_i)}.
    \end{align*}
    The first term, $T\Expect{\mu^\star}$, is independent of the agent choices, hence maximizing $\Expect{\sum_{t=1}^T r^t_i(a^t_i)}$ minimizes the regret.
\end{proof}

\newpage
\afterappendix

\end{document}